\setlist[itemize]{leftmargin=*}
\theoremstyle{plain}
\newtheorem{theorem}{Theorem}[section]
\newtheorem{proposition}[theorem]{Proposition}
\newtheorem{lemma}[theorem]{Lemma}
\newtheorem{corollary}[theorem]{Corollary}
\theoremstyle{definition}
\newtheorem{assumption}[theorem]{Assumption}
\theoremstyle{remark}
\icmltitlerunning{Efficient Cross-Domain Offline Reinforcement Learning with Dynamics- and Value-Aligned Data Filtering}
\begin{document}

\twocolumn[
  \icmltitle{Efficient Cross-Domain Offline Reinforcement Learning with Dynamics- and Value-Aligned Data Filtering}



  \icmlsetsymbol{equal}{*}

  \begin{icmlauthorlist}
    \icmlauthor{Zhongjian Qiao}{cityu}
    \icmlauthor{Rui Yang}{UIUC}
    \icmlauthor{Jiafei Lyu}{Tencent}
    \icmlauthor{Chenjia Bai}{Tele}
    \icmlauthor{Xiu Li}{thu}
    \icmlauthor{Siyang Gao}{cityu}
    \icmlauthor{Shuang Qiu${}^\dag$}{cityu}
  \end{icmlauthorlist}

  \icmlaffiliation{cityu}{City University of Hong Kong}
  \icmlaffiliation{UIUC}{University of Illinois Urbana-Champaign}
  \icmlaffiliation{Tencent}{Tencent}
  \icmlaffiliation{Tele}{Institute of Artificial
Intelligence, China Telecom (TeleAI)}
  \icmlaffiliation{thu}{Tsinghua University. ${}^\dag$Corresponding Author}

  \icmlcorrespondingauthor{Shuang Qiu}{shuanqiu@cityu.edu.hk}

  \icmlkeywords{Machine Learning, ICML}

  \vskip 0.3in
]



\printAffiliationsAndNotice{}  

\begin{abstract}
Cross-domain offline reinforcement learning (RL) aims to train a well-performing agent in the target environment, leveraging both a limited target domain dataset and a source domain dataset with (possibly) sufficient data coverage. Due to the underlying dynamics misalignment between source and target domains, naively merging the two datasets may incur inferior performance. Recent advances address this issue by selectively leveraging source domain samples whose dynamics align well with the target domain. However, our work demonstrates that dynamics alignment alone is insufficient, by examining the limitations of prior frameworks and deriving a new target domain sub-optimality bound for the policy learned on the source domain. More importantly, our theory underscores an additional need for \textit{value alignment}, i.e., selecting high-quality, high-value samples from the source domain, a critical dimension overlooked by existing works. Motivated by such theoretical insight, we propose \textbf{\underline{D}}ynamics- and \textbf{\underline{V}}alue-aligned \textbf{\underline{D}}ata \textbf{\underline{F}}iltering (DVDF) method, a novel unified cross-domain RL framework that selectively incorporates source domain samples exhibiting strong alignment in \textit{both dynamics and values}.
We empirically study a range of dynamics shift scenarios, including kinematic and morphology shifts, and evaluate DVDF on various tasks and datasets, even in the challenging setting where the target domain dataset contains an extremely limited amount of data. 
Extensive experiments demonstrate that DVDF consistently outperforms strong baselines with significant improvements.
Our code is available at \url{https://github.com/zq2r/DVDF.git}.
\end{abstract}

\section{Introduction}
\label{sec:intro}

Reinforcement learning (RL)~\cite{sutton1999reinforcement} has seen remarkable progress in fields like video games~\cite{ye2020mastering,mnih2013playing} and robotics~\cite{kober2013reinforcement,kormushev2013reinforcement}. However, the frequent interactions required for online RL can be expensive or risky in real-world applications. Offline RL~\cite{levine2020offline,prudencio2023survey} addresses this by learning from pre-collected datasets, eliminating the need for online interaction. Yet, its performance is often constrained by the limited size of target datasets, as extensive data collection remains costly. To overcome this, cross-domain offline RL~\citep{wen2024contrastive,liu2022dara,liu2024beyond} leverages additional data (called source domain) collected from environments related to but distinct from the target one for policy learning. 

Although the idea of leveraging additional source domain data to benefit target policy learning is promising, the key challenge lies in that the source and target environments may differ in transition dynamics, and simply merging the source and target data for training could degrade the performance~\citep{wen2024contrastive, liu2024beyond} due to the out-of-distribution (OOD) transition dynamics issue~\citep{liu2024beyond}. Previous solutions for this issue include training a domain classifier for reward augmentation~\citep{liu2022dara,eysenbach2020off}, using supported value optimization and conservative regularization to mitigate overestimation~\citep{liu2024beyond}, etc. Recent advances~\citep{xu2024cross, wen2024contrastive, lyu2025cross} introduce dynamics-aware data filtering, where source domain samples are selectively shared based on their alignment with the target dynamics to enhance policy learning. For example, and IGDF~\citep{wen2024contrastive} leverages contrastive representation for data filtering, OTDF~\citep{lyu2025cross} selects source domain data based on optimal transport. Despite methodological differences, these studies share a common idea: \textit{source domain samples with smaller dynamics misalignment facilitate target policy learning, whereas those with larger misalignment impede it}. However, we argue that this point may not universally hold, as it overlooks the significance of 
\textit{value alignment}: the selected data should also exhibit high quality other than aligned dynamics. Intuitively, high-quality source samples with moderate dynamics misalignment may contribute more to target policy learning than low-quality samples that are well aligned in dynamics.
Consider the case where the source domain dataset consists of non-expert low-quality samples with minor dynamics misalignment and expert samples with larger dynamics misalignment. Methods based on dynamics-aware data filtering will only select low-quality samples, although these samples may contribute little to policy learning. 
Instead, incorporating expert samples (despite larger dynamics misalignment) may yield better performance. Therefore, we raise the question: \textit{Can we devise a cross-domain offline RL method that jointly considers dynamics alignment and value alignment?}

In this paper, we propose a simple yet effective solution for the above question, called \textbf{\underline{D}}ynamics- and \textbf{\underline{V}}alue-aligned \textbf{\underline{D}}ata \textbf{\underline{F}}iltering (DVDF). We start with a motivating example to empirically show that only considering dynamics alignment is not enough for efficient cross-domain offline RL. From a theoretical perspective, we reveal that existing theoretical frameworks that \textit{focus on tightening the performance discrepancy of a given policy between the source and target domain} \textbf{misalign with the learning objective}, and fails to guarantee learning a well-performing target policy. This explains the limitations of the recent methods like IGDF and OTDF. Alternatively, we derive a concrete sub-optimality bound for policies trained on the source domain and evaluated on the target domain, demonstrating that both dynamics and value alignments are essential for cross-domain offline RL.
Based on this theoretical insight, we present our method, DVDF, which utilizes an advantage function pre-trained on the source domain to measure the value misalignment and incorporates dynamics-aware data filtering
to capture the dynamics misalignment within a unified framework. 
Then DVDF trades off dynamics and value misalignment and selectively shares source domain samples to train the policy. DVDF can be generally treated as a \textbf{plug-in module} and seamlessly integrated with recent methods like IGDF and OTDF. 
Our contributions can be summarized as follows.

\begin{itemize}[topsep=0pt, partopsep=0pt]
    \item We examine the limitations of the current theoretical analysis framework for cross-domain offline RL, and theoretically demonstrate that both dynamics alignment and value alignment are essential for cross-domain offline RL, providing new insights for the field.
    
    \item Based on the theoretical insight, we propose our method, DVDF, which jointly considers dynamics alignment and value alignment, and selectively shares source domain data for policy learning. DVDF is a plug-in module and can be integrated into other methods like IGDF and OTDF.
    
    \item 
    We conduct extensive experiments across various dynamics shift conditions, which demonstrate that DVDF exhibits superior performance on many tasks and datasets compared to strong baselines. We further test DVDF under challenging conditions where the target domain dataset is extremely limited~\citep{lyu2024odrlabenchmark, lyu2025cross}, and observe DVDF delivers exceptional performance.
\end{itemize}

\section{Preliminaries}

We consider a Markov Decision Process (MDP)~\citep{puterman1990markov} defined by the six-tuple $\mathcal{M}=(\mathcal{S},\mathcal{A},P,r,\rho,\gamma)$ where $\mathcal{S}$ is the state space, $\mathcal{A}$ is the action space, $P: \mathcal{S}\times\mathcal{A}\rightarrow\Delta(\mathcal{S})$ is the transition dynamics, $\Delta(\cdot)$ is the probability simplex, $r(s,a):\mathcal{S}\times\mathcal{A}\rightarrow[-r_{\rm max},r_{\rm max}]$ is the reward function, $\rho$ is the initial state distribution, and $\gamma$ is the discount factor. RL aims to learn a policy $\pi:\mathcal{S}\rightarrow\Delta(\mathcal{A})$ that maximizes the objective $J_{\mathcal{M}}(\pi)\coloneqq\mathbb{E}_{\pi}\left[\sum_{t=0}^\infty\gamma^tr(s_t,a_t)\right]$. 

In the cross-domain RL setting, we assume that we have access to a \textit{source domain} $\mathcal{M}_{\text{src}}=(\mathcal{S},\mathcal{A},P_{\text{src}},r,\rho,\gamma)$ and a \textit{target domain} $\mathcal{M}_{\text{tar}}=(\mathcal{S},\mathcal{A},P_{\text{tar}},r,\rho,\gamma)$. The only difference between the two domains is the transition dynamics. In the offline setting, only a target domain dataset $\mathcal{D}_\text{tar}=\{(s_i,a_i,r_i,s_{i+1})\}_{i=1}^{N_1}$ and a source domain dataset $\mathcal{D}_\text{src}=\{(s_i,a_i,r_i,s_{i+1})\}_{i=1}^{N_2}$ are available, where $N_1\ll N_2$. The goal of cross-domain offline RL is to leverage $\mathcal{D}_\text{tar}$ and $\mathcal{D}_\text{src}$ to improve the performance of the agent in the target domain, where $\mathcal{D}_{\text{src}}$ and $\mathcal{D}_{\text{tar}}$ denote the datasets collected in the source and target domain, respectively.

\begin{figure*}[t]
    \centering
    \includegraphics[width=0.98\linewidth]{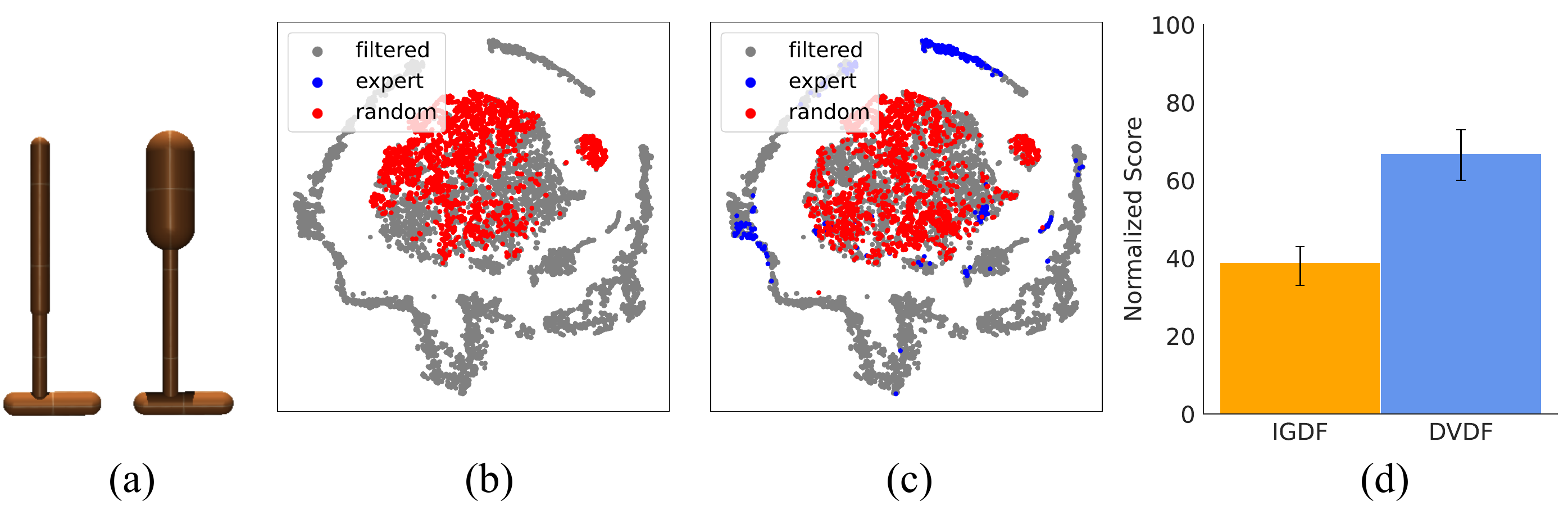}
    \caption{\textbf{(a)} Robot morphology visualization of target domain (left) and source domain (right). \textbf{(b)} Source data filtering visualization of IGDF. \textbf{(c)} Source data filtering visualization of DVDF. \textbf{(d)} Performance comparison between IGDF and DVDF on the target domain.} 
    \vspace{-0.cm}
    \label{fig:motivation}
\end{figure*}

\section{Motivating Example: Dynamics Alignment Alone Is Insufficient}
\label{sec:motivation}
In this section, we use a simple example to demonstrate our claim: solely considering dynamics alignment is insufficient for efficient cross-domain offline RL.

We consider the following cross-domain RL scenario: the target domain is \texttt{hopper-v2} task from MuJoCo~\citep{todorov2012mujoco}, and the source domain is \texttt{hopper-v2} task with morphology shift (the head size of the robot is increased), called \texttt{hopper-morph-v2}. We visualize the morphology of the robots in Figure~\ref{fig:motivation} (a). In the offline setting, we require both source and target domain datasets. For the target domain, we extract a $10\%$ subset from the \texttt{hopper-medium-v2} dataset in D4RL~\citep{fu2020d4rl}. The source domain comprises a mixture of: (1) 0.5M random-level samples from \texttt{hopper-random-v2} dataset, and (2) 0.5M expert-level samples collected by a well-trained SAC~\citep{haarnoja2018soft} policy in the \texttt{hopper-morph-v2} environment. Given such source and target domain datasets, we implement the original IGDF~\citep{wen2024contrastive} and our proposed DVDF method (based on IGDF) for source data filtering and target policy learning. We set the source data selection ratio to $25\%$ for both DVDF and IGDF.

We visualize the source data filtering results of IGDF and DVDF using t-SNE~\citep{van2008visualizing}, as shown in Figure~\ref{fig:motivation} 
 (b) and (c), respectively. The \textcolor{gray}{gray} points represent filtered samples, \textcolor{blue}{blue} points indicate selected expert samples exhibiting dynamics shifts, and \textcolor{red}{red} points denote selected random samples without dynamics shifts. The result reveals that IGDF exclusively selects random samples, whereas DVDF incorporates both random and expert samples. We further evaluate the policies trained by each method in the target environment, with the normalized score presented in Figure~\ref{fig:motivation} (d). The results demonstrate that DVDF achieves a significantly higher average score of \textbf{67} compared to 39 obtained by IGDF, representing a $\textbf{71\%}$ performance improvement. This substantial improvement demonstrates that the shifted expert data can significantly enhance policy learning, validating our motivation that efficient cross-domain policy learning requires joint consideration of both dynamics and value alignment. More details and results can be found in Appendix~\ref{appendix:motivation}.

\section{What is Truly Essential for Efficient Cross-Domain Offline RL?}

In this section, we provide theoretical insights for our motivation by rethinking and examining the limitations of the current theoretical framework for cross-domain RL. Our analysis reveals a fundamental gap in the existing theoretical foundation, prompting us to answer an important question: \textit{what is truly essential for efficient cross-domain offline RL?}

To answer the above question, we first present the key theoretical framework for recent cross-domain RL methods~\citep{wen2024contrastive, lyu2025cross, xu2024cross, lyu2024cross} in Lemma~\ref{lemma:1}, which mainly relies on establishing a performance difference bound of a given policy between the source and target domain:

\begin{lemma}[\textbf{Performance difference bounded by the dynamics misalignment}]
\label{lemma:1}
Denote the MDP of the source domain and target domain as $\mathcal{M}_{\mathrm{src}}$ and $\mathcal{M}_{\mathrm{tar}}$. We have the performance difference of a policy $\pi$ under $\mathcal{M}_{\mathrm{src}}$ and $\mathcal{M}_{\mathrm{tar}}$ as below,
\begin{equation}
\begin{aligned}
    &\quad\left|J_{\mathcal{M}_{\mathrm{tar}}}(\pi) - J_{\mathcal{M}_{\mathrm{src}}}(\pi)\right|\\ &\leq C_1\cdot\underbrace{\sup_{s,a}\left[D_{\mathrm{TV}}(P_{\mathrm{src}}(\cdot|s,a),P_{\mathrm{tar}}(\cdot|s,a))\right]}_{\mathrm{dynamics \, misalignment}},
\end{aligned}
\end{equation}
where $C_1=\frac{2\gamma r_{\max}}{(1-\gamma)^2}$ is a positive constant.
\end{lemma}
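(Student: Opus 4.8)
The plan is to bound the performance difference $\left|J_{\mathcal{M}_{\mathrm{tar}}}(\pi) - J_{\mathcal{M}_{\mathrm{src}}}(\pi)\right|$ by telescoping over the difference in state-action visitation distributions induced by the two transition kernels under the same policy $\pi$. First I would recall that $J_{\mathcal{M}}(\pi) = \frac{1}{1-\gamma}\,\mathbb{E}_{(s,a)\sim d^\pi_{\mathcal{M}}}\left[r(s,a)\right]$, where $d^\pi_{\mathcal{M}}$ is the normalized discounted state-action occupancy measure. Since $r$ is identical in both domains, the entire difference is carried by $d^\pi_{\mathcal{M}_{\mathrm{src}}} - d^\pi_{\mathcal{M}_{\mathrm{tar}}}$, so the problem reduces to controlling $\left\|d^\pi_{\mathcal{M}_{\mathrm{src}}} - d^\pi_{\mathcal{M}_{\mathrm{tar}}}\right\|_1$ in terms of the one-step dynamics gap.

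The key step is a standard simulation-lemma argument. Writing $d^\pi_{\mathcal{M}} = (1-\gamma)\sum_{t=0}^\infty \gamma^t P_t^\pi$ where $P_t^\pi$ is the time-$t$ state-action distribution, I would expand the difference between the two occupancy measures as a telescoping sum over the first point at which the trajectories ``diverge,'' i.e., couple the two processes and bound the probability of disagreement at step $t$ by a sum of one-step TV distances along the way. Concretely, for the state marginals one gets $\|P^{\mathrm{src}}_{t} - P^{\mathrm{tar}}_{t}\|_1 \le 2\,t\cdot \sup_{s,a} D_{\mathrm{TV}}(P_{\mathrm{src}}(\cdot|s,a), P_{\mathrm{tar}}(\cdot|s,a))$ by induction on $t$, using the triangle inequality to split the error at step $t+1$ into (i) the error inherited from the step-$t$ marginal being off, and (ii) the fresh one-step transition error. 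Summing the geometric-type series $\sum_t \gamma^t \cdot t = \gamma/(1-\gamma)^2$ and combining with the $\frac{1}{1-\gamma}$ normalization and the reward bound $|r|\le r_{\max}$ yields exactly the constant $C_1 = \frac{2\gamma r_{\max}}{(1-\gamma)^2}$.

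An alternative route, which I might present instead for brevity, is to use the Bellman-flow fixed-point characterization: both $d^\pi_{\mathcal{M}_{\mathrm{src}}}$ and $d^\pi_{\mathcal{M}_{\mathrm{tar}}}$ satisfy $d = (1-\gamma)\rho_0^\pi + \gamma (T^\pi)^\top d$ with the respective kernels $T^\pi_{\mathrm{src}}, T^\pi_{\mathrm{tar}}$, subtract the two fixed-point equations, and solve for the difference as $(I - \gamma (T^\pi_{\mathrm{tar}})^\top)^{-1}$ applied to $\gamma\big((T^\pi_{\mathrm{src}})^\top - (T^\pi_{\mathrm{tar}})^\top\big)^\top d^\pi_{\mathrm{src}}$; bounding the resolvent operator norm by $\frac{1}{1-\gamma}$ in the appropriate $\ell_1$ sense and the kernel difference by $2\sup_{s,a}D_{\mathrm{TV}}$ closes the argument. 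Either way, the main obstacle is being careful with the normalization constants and with the direction of the $\ell_1$/TV inequalities when passing between state distributions and state-action distributions (the extra factor of $2$ relating $\ell_1$ distance and total variation, and making sure the induction on $t$ is tight enough to land exactly on $C_1$ rather than a looser constant). Everything else is routine.
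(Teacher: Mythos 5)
Your proposal is correct and arrives at exactly the constant $C_1=\frac{2\gamma r_{\max}}{(1-\gamma)^2}$, but it takes a genuinely different route from the paper. The paper invokes the telescoping (performance-difference) lemma to write $J_{\mathcal{M}_{\mathrm{tar}}}(\pi)-J_{\mathcal{M}_{\mathrm{src}}}(\pi)$ as $\frac{\gamma}{1-\gamma}\,\mathbb{E}_{(s,a)\sim\rho^\pi_{\mathcal{M}_{\mathrm{tar}}}}\bigl[\mathbb{E}_{s'\sim P_{\mathrm{tar}}}V^\pi_{\mathcal{M}_{\mathrm{src}}}(s')-\mathbb{E}_{s'\sim P_{\mathrm{src}}}V^\pi_{\mathcal{M}_{\mathrm{src}}}(s')\bigr]$, then bounds the value function by $r_{\max}/(1-\gamma)$ and the integrated kernel difference by $2\sup_{s,a}D_{\mathrm{TV}}$; the factor $1/(1-\gamma)^2$ thus arises as $\frac{\gamma}{1-\gamma}\cdot\frac{1}{1-\gamma}$ from the occupancy normalization times the value bound. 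You instead push the entire dynamics error into the occupancy measures via a simulation-lemma induction, $\|P^{\mathrm{src}}_t-P^{\mathrm{tar}}_t\|_1\le 2t\,\sup_{s,a}D_{\mathrm{TV}}$, and recover the same factor from $\sum_t\gamma^t t=\gamma/(1-\gamma)^2$ together with the $\frac{1}{1-\gamma}$ in front of the reward expectation; the arithmetic checks out and lands on $C_1$ exactly. Your argument is more self-contained (no external telescoping lemma, only the triangle-inequality induction on marginals and the contraction of Markov kernels in $\ell_1$), at the cost of being somewhat longer; the paper's argument is shorter once the telescoping lemma is granted, and has the side benefit of exposing the value function explicitly, which is what allows the Lipschitz refinement of $C_1$ to $\frac{\gamma}{1-\gamma}K_V$ stated in the appendix remark — a refinement that is not as immediate from your occupancy-measure decomposition.
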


According to Lemma~\ref{lemma:1}, the performance difference is bounded by the dynamics misalignment between the source and target domains. Thus, selectively sharing source domain samples with smaller dynamics misalignment can tighten the performance difference, which builds the theoretical foundation for prior studies~\citep{wen2024contrastive, xu2024cross, lyu2024cross, lyu2025cross}. However, a critical limitation of this analysis is that \textbf{it misaligns with the RL objective}, that is, obtaining a policy $\pi$ to maximize $J_{\mathcal{M}_\mathrm{tar}}(\pi)$. Therefore, tightening such a performance difference bound does not necessarily lead to a well-performing policy in the target domain. Instead, it is more reasonable to narrow the sub-optimality gap of a policy $\pi$ trained on the source domain and evaluated on the target domain. Specifically, we denote the optimal policy in $\mathcal{M}_\mathrm{src}$ as $\pi^\star_\mathrm{src}$, and the in-sample optimal policy~\cite{kostrikov2021offline} extracted from the source domain dataset as $\pi^\star_\text{insrc}$. We define $\epsilon_\mathrm{src}^\star:=|J_{\mathcal{M}_\mathrm{src}}(\pi^\star_\mathrm{src})-J_{\mathcal{M}_\mathrm{src}}(\pi^\star_\mathrm{insrc})|$ to as the statistical error of the inherent performance difference between $\pi^\star_\text{src}$ and $\pi^\star_\text{insrc}$ on the source domain, which is constant given a source domain dataset. We aim to minimize the sub-optimality gap of $\pi$ in the target domain: $\mathrm{SubOpt}(\pi):= |J_{\mathcal{M}_{\mathrm{tar}}}(\pi)-J_{\mathcal{M}_{\mathrm{tar}}}(\pi^\star_{\mathrm{tar}})|$, where $\pi^\star_\mathrm{tar}$ is the optimal policy in the target domain. We derive the upper bound of this sub-optimality gap in Proposition~\ref{proposition:1}.

\begin{proposition}[\textbf{Sub-optimality gap on target domain}]
\label{proposition:1}
Denote the MDP of the source domain and target domain as $\mathcal{M}_{\mathrm{src}}$ and $\mathcal{M}_{\mathrm{tar}}$. For a policy $\hat{\pi}$ trained on $\mathcal{M}_{\mathrm{src}}$, the sub-optimality gap of $\hat{\pi}$ on $\mathcal{M}_{\mathrm{tar}}$ can be bounded as below,
\begin{equation}
\begin{aligned}
        &\mathrm{SubOpt}(\hat{\pi})\leq \underbrace{\left|J_{\mathcal{M}_{\mathrm{src}}}(\hat\pi)-J_{\mathcal{M}_{\mathrm{src}}}(\pi^\star_{\mathrm{insrc}})\right|}_{\mathrm{(a) \, value \, misalignment}} + \\ &~~~~~~~C_2\cdot\underbrace{\sup_{s,a}\left[D_{\mathrm{TV}}(P_{\mathrm{src}}(\cdot|s,a),P_{\mathrm{tar}}(\cdot|s,a))\right]}_{\mathrm{(b) \, dynamics \,misalignment}}+\epsilon_\mathrm{src}^\star,
\end{aligned}
\end{equation}
where $C_2=\frac{(2\gamma+2)r_{\max}}{(1-\gamma)^2}$ is a positive constant.

\end{proposition}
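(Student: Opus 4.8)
The plan is to decompose $\mathrm{SubOpt}:=|J_{\mathcal{M}_{\mathrm{tar}}}(\pi)-J_{\mathcal{M}_{\mathrm{tar}}}(\pi^\star_{\mathrm{tar}})|$ through a chain of intermediate quantities, inserting the source-domain performances of $\pi$, of $\pi^\star_{\mathrm{insrc}}$, and of $\pi^\star_{\mathrm{tar}}$, and then applying the triangle inequality. Concretely, I would write
\begin{align*}
J_{\mathcal{M}_{\mathrm{tar}}}(\pi)-J_{\mathcal{M}_{\mathrm{tar}}}(\pi^\star_{\mathrm{tar}})
&= \bigl[J_{\mathcal{M}_{\mathrm{tar}}}(\pi)-J_{\mathcal{M}_{\mathrm{src}}}(\pi)\bigr]
 + \bigl[J_{\mathcal{M}_{\mathrm{src}}}(\pi)-J_{\mathcal{M}_{\mathrm{src}}}(\pi^\star_{\mathrm{insrc}})\bigr] \\
&\quad + \bigl[J_{\mathcal{M}_{\mathrm{src}}}(\pi^\star_{\mathrm{insrc}})-J_{\mathcal{M}_{\mathrm{src}}}(\pi^\star_{\mathrm{src}})\bigr]
 + \bigl[J_{\mathcal{M}_{\mathrm{src}}}(\pi^\star_{\mathrm{src}})-J_{\mathcal{M}_{\mathrm{src}}}(\pi^\star_{\mathrm{tar}})\bigr] \\
&\quad + \bigl[J_{\mathcal{M}_{\mathrm{src}}}(\pi^\star_{\mathrm{tar}})-J_{\mathcal{M}_{\mathrm{tar}}}(\pi^\star_{\mathrm{tar}})\bigr].
\end{align*}
The second bracket is exactly the ``value misalignment'' term (a). The third bracket equals $-\epsilon_{\mathrm{opt}}$ by definition of $\epsilon_{\mathrm{opt}}$, so its absolute value contributes $\epsilon_{\mathrm{opt}}$. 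The fourth bracket is nonpositive, since $\pi^\star_{\mathrm{src}}$ maximizes $J_{\mathcal{M}_{\mathrm{src}}}$, so it can be dropped after taking absolute values in the right direction (i.e.\ it only helps the bound); I would need to handle signs carefully here, arguing that $J_{\mathcal{M}_{\mathrm{tar}}}(\pi)\le J_{\mathcal{M}_{\mathrm{tar}}}(\pi^\star_{\mathrm{tar}})$ so $\mathrm{SubOpt}=J_{\mathcal{M}_{\mathrm{tar}}}(\pi^\star_{\mathrm{tar}})-J_{\mathcal{M}_{\mathrm{tar}}}(\pi)$ and then the $\pi^\star_{\mathrm{src}}$ optimality gives a term of favorable sign.

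Next I would bound the two ``domain-transfer'' brackets — the first and the last — using Lemma~\ref{lemma:1}. Applying that lemma with the policy $\pi$ gives $|J_{\mathcal{M}_{\mathrm{tar}}}(\pi)-J_{\mathcal{M}_{\mathrm{src}}}(\pi)|\le C_1\sup_{s,a}D_{\mathrm{TV}}(P_{\mathrm{src}}(\cdot|s,a),P_{\mathrm{tar}}(\cdot|s,a))$, and applying it again with the policy $\pi^\star_{\mathrm{tar}}$ gives the same bound for the last bracket. Summing the two yields a factor $2C_1 = \frac{4\gamma r_{\max}}{(1-\gamma)^2}$; since $C_2=\frac{(2\gamma+2)r_{\max}}{(1-\gamma)^2}\ge \frac{4\gamma r_{\max}}{(1-\gamma)^2} = 2C_1$ (as $2\gamma+2\ge 4\gamma \iff \gamma\le 1$), the constant $C_2$ comfortably absorbs both transfer terms. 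Combining, taking absolute values term-by-term, and using the triangle inequality then gives
\[
\mathrm{SubOpt}\le \bigl|J_{\mathcal{M}_{\mathrm{src}}}(\pi)-J_{\mathcal{M}_{\mathrm{src}}}(\pi^\star_{\mathrm{insrc}})\bigr| + C_2\cdot\sup_{s,a}D_{\mathrm{TV}}(P_{\mathrm{src}}(\cdot|s,a),P_{\mathrm{tar}}(\cdot|s,a)) + \epsilon_{\mathrm{opt}},
\]
which is the claimed bound.

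The only genuinely delicate step is the bookkeeping of signs so that the $\pi^\star_{\mathrm{src}}$-optimality term and the $\epsilon_{\mathrm{opt}}$ term land with the correct orientation — i.e.\ showing the fourth bracket may simply be discarded rather than bounded, and that the third contributes $+\epsilon_{\mathrm{opt}}$ and not $-\epsilon_{\mathrm{opt}}$. Everything else is a routine triangle-inequality assembly together with two invocations of Lemma~\ref{lemma:1} and the elementary inequality $2C_1\le C_2$. I would also remark that the same argument goes through if Lemma~\ref{lemma:1} is only available up to the in-sample optimal policies, and that no additional assumptions on $\pi$ beyond ``trained on $\mathcal{M}_{\mathrm{src}}$'' are needed, since the decomposition is purely algebraic.
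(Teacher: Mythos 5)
Your proof is correct, and for the dynamics-misalignment piece it takes a genuinely different route from the paper's. The paper uses a four-term decomposition that leaves the cross-MDP term in the form $\left|J_{\mathcal{M}_\mathrm{src}}(\pi^\star_\mathrm{src}) - J_{\mathcal{M}_\mathrm{tar}}(\pi^\star_\mathrm{tar})\right|$ --- the gap between the \emph{optimal values} of two MDPs with different dynamics --- and bounds it by $\frac{2r_{\max}}{(1-\gamma)^2}\sup_{s,a}D_{\mathrm{TV}}$ via a separate recursive argument on the Bellman optimality equations (comparing optimal values horizon by horizon and unrolling the recursion); adding this to a single application of Lemma~\ref{lemma:1} to $\pi$ gives exactly $C_2 = \frac{(2\gamma+2)r_{\max}}{(1-\gamma)^2}$. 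You instead insert $J_{\mathcal{M}_\mathrm{src}}(\pi^\star_\mathrm{tar})$ as an extra anchor, discard the bracket $J_{\mathcal{M}_\mathrm{src}}(\pi^\star_\mathrm{src}) - J_{\mathcal{M}_\mathrm{src}}(\pi^\star_\mathrm{tar})$ by source-domain optimality, and apply Lemma~\ref{lemma:1} a second time to the fixed policy $\pi^\star_\mathrm{tar}$. This is more elementary --- no bespoke dynamic-programming argument is needed --- and it yields the slightly sharper constant $2C_1 = \frac{4\gamma r_{\max}}{(1-\gamma)^2} \le C_2$; the paper's route, in exchange, establishes a standalone bound on the optimal-value gap between the two domains that is of independent interest. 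One small correction to your prose: the bracket $J_{\mathcal{M}_\mathrm{src}}(\pi^\star_\mathrm{src}) - J_{\mathcal{M}_\mathrm{src}}(\pi^\star_\mathrm{tar})$ is non\emph{negative}, not nonpositive; since $\mathrm{SubOpt} = J_{\mathcal{M}_\mathrm{tar}}(\pi^\star_\mathrm{tar}) - J_{\mathcal{M}_\mathrm{tar}}(\pi)$ is the \emph{negation} of your telescoping sum, that bracket enters with a minus sign and is precisely the term that may be dropped --- the favorable orientation you correctly flag as the delicate step --- so the argument goes through as planned.
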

According to Proposition~\ref{proposition:1}, the sub-optimality gap in the target domain can be controlled by three terms: (a) \textit{value misalignment}, representing the sub-optimality on the source domain; (b) \textit{dynamics misalignment} as considered in previous works; (c) a constant statistical error $\epsilon_\mathrm{src}^\star$. To tighten such a sub-optimality gap, both dynamics and value misalignment need to be minimized. Hence, we can answer the previous question: \textit{both dynamics and value alignment are essential for efficient cross-domain offline RL}.

\section{Dynamics- and Value-Aligned Data Filtering}

Proposition~\ref{proposition:1} conveys a promising way to learn a well-performing policy in the target domain: \textbf{(1)} minimize the dynamics misalignment between the source domain and target domain; \textbf{(2)} minimize the value misalignment between the learned policy and the in-sample optimal policy in the source domain. Neglecting either factor would compromise policy performance. To address this, we adopt a data filtering paradigm inspired by prior works~\citep{xu2024cross, wen2024contrastive,lyu2025cross}, which retains source domain data that exhibits aligned dynamics with the target domain, and on which the learned policy can be close to the in-sample optimal policy in the source domain.  

\subsection{Advantage-Aware Value Alignment}
\label{sec:advantage_value}
Given that several existing methods can be applied for measuring dynamics misalignment, such as contrastive learning~\citep{wen2024contrastive} and optimal transport~\citep{lyu2025cross}, the crucial part remains how to capture value misalignment. To tackle this problem, we derive an upper bound of the value misalignment term, which gives us insight for the practical solution.

\begin{proposition}[\textbf{Value Misalignment}]
\label{proposition:2}

Denote the MDP of the source domain and target domain as $\mathcal{M}_{\mathrm{src}}$ and $\mathcal{M}_{\mathrm{tar}}$, and the behavior policy of the $\mathcal{D}_\mathrm{src}$ as $\mu$. For policy $\hat\pi$ trained on $\mathcal{D}_\mathrm{src}$ using an in-sample offline RL algorithm (e.g., IQL), we assume that $\left(\hat\pi(a|s)-\mu(a|s)\right)A_\mu(s,a)\geq0$. Then the value misalignment in Proposition~\ref{proposition:1} can be upper bounded as follows:
    \begin{align}
    &\left|J_{\mathcal{M}_{\mathrm{src}}}(\hat\pi)-J_{\mathcal{M}_{\mathrm{src}}}(\pi^\star_{\mathrm{insrc}})\right| \leq  \label{eq:1}\\ 
    &  ~~-\mathbb{E}_{s\sim\rho_\mu(\cdot),a\sim\mu(\cdot|s)}\left[A_{\pi^\star_{\mathrm{insrc}}}(s,a)\right]+ \mathcal{O}\left(\frac{1}{(1-\gamma)^2}\right), \nonumber
    \end{align}
where $A(s,a)$ is the advantage function, and $\rho_\mu(\cdot)$ is the state visiting distribution under $\mu$.

\end{proposition}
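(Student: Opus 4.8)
The plan is to relate both sides via the performance difference lemma (Kakade--Langford) applied in the source MDP $\mathcal{M}_{\mathrm{src}}$. Recall that for any two policies $\pi_1,\pi_2$ one has
\[
J_{\mathcal{M}_{\mathrm{src}}}(\pi_1) - J_{\mathcal{M}_{\mathrm{src}}}(\pi_2) = \frac{1}{1-\gamma}\,\mathbb{E}_{s\sim\rho_{\pi_1},\,a\sim\pi_1(\cdot|s)}\bigl[A^{\pi_2}_{\mathcal{M}_{\mathrm{src}}}(s,a)\bigr].
\]
I would instantiate this twice: once with $(\pi_1,\pi_2)=(\pi,\pi^\star_{\mathrm{insrc}})$ to rewrite the left-hand side of \eqref{eq:1}, and once (or instead, an expansion around $\mu$) to convert the expectation under $\rho_\pi$ into one under $\rho_\mu$, which is the distribution appearing on the right-hand side. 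So the first step is to write
\[
J_{\mathcal{M}_{\mathrm{src}}}(\pi)-J_{\mathcal{M}_{\mathrm{src}}}(\pi^\star_{\mathrm{insrc}}) = \frac{1}{1-\gamma}\,\mathbb{E}_{s\sim\rho_\pi,\,a\sim\pi(\cdot|s)}\bigl[A_{\pi^\star_{\mathrm{insrc}}}(s,a)\bigr],
\]
and the target is to lower bound this by $\mathbb{E}_{s\sim\rho_\mu,a\sim\mu}[A_{\pi^\star_{\mathrm{insrc}}}(s,a)]$ minus an $\mathcal{O}\bigl(1/(1-\gamma)^2\bigr)$ error.

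The second step handles the mismatch between $(\rho_\pi,\pi)$ and $(\rho_\mu,\mu)$. I would split the difference
\[
\mathbb{E}_{\rho_\pi,\pi}\bigl[A_{\pi^\star_{\mathrm{insrc}}}\bigr] - \mathbb{E}_{\rho_\mu,\mu}\bigl[A_{\pi^\star_{\mathrm{insrc}}}\bigr]
\]
into (i) a change-of-action term $\mathbb{E}_{\rho_\mu}\bigl[\sum_a (\pi(a|s)-\mu(a|s))A_{\pi^\star_{\mathrm{insrc}}}(s,a)\bigr]$ and (ii) a change-of-state-distribution term $\sum_s(\rho_\pi(s)-\rho_\mu(s))\sum_a\pi(a|s)A_{\pi^\star_{\mathrm{insrc}}}(s,a)$. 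For (ii) I would bound it in absolute value by $\|\rho_\pi-\rho_\mu\|_1 \cdot \sup_{s,a}|A_{\pi^\star_{\mathrm{insrc}}}(s,a)|$; since advantages are bounded by $\mathcal{O}(r_{\max}/(1-\gamma))$ and the state-occupancy TV distance between two policies is itself $\mathcal{O}(1/(1-\gamma))$ (it telescopes through the per-step policy gap and the discounting), this contributes the $\mathcal{O}(1/(1-\gamma)^2)$ slack advertised in the statement. For (i), here is where the sign assumption $(\pi(a|s)-\mu(a|s))A_\mu(s,a)\ge 0$ enters: I would need the integrand involving $A_{\pi^\star_{\mathrm{insrc}}}$ to be controlled, most plausibly by arguing that $A_{\pi^\star_{\mathrm{insrc}}}$ and $A_\mu$ have a related sign structure on the support of $\mu$, or by absorbing the discrepancy $A_{\pi^\star_{\mathrm{insrc}}}-A_\mu$ into another $\mathcal{O}(1/(1-\gamma)^2)$ term and using the assumption to drop the remaining $\sum_a(\pi-\mu)A_\mu \ge 0$ piece.

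Putting the pieces together, the $\frac{1}{1-\gamma}$ prefactor times $\mathbb{E}_{\rho_\pi,\pi}[A_{\pi^\star_{\mathrm{insrc}}}]$ is shown to be at least $\mathbb{E}_{\rho_\mu,\mu}[A_{\pi^\star_{\mathrm{insrc}}}]$ (possibly after also absorbing the $\frac{1}{1-\gamma}$ scaling and a nonnegative advantage-improvement term using the assumption) minus the accumulated $\mathcal{O}(1/(1-\gamma)^2)$ errors, which is exactly \eqref{eq:1}. I expect the main obstacle to be step (i): making the leap from the hypothesis, which is phrased in terms of $A_\mu$, to a usable statement about $A_{\pi^\star_{\mathrm{insrc}}}$ — i.e. controlling $\mathbb{E}_{\rho_\mu}[\sum_a(\pi-\mu)(A_{\pi^\star_{\mathrm{insrc}}}-A_\mu)]$ cleanly. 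A secondary technical point is being careful with the exact form of the performance difference lemma (the $\frac{1}{1-\gamma}$ normalization and whether the occupancy measure is normalized), since that determines which terms land inside the $\mathcal{O}(1/(1-\gamma)^2)$ bucket versus contributing a leading-order term; I would keep the constants loose and lean on the $\mathcal{O}(\cdot)$ notation the statement already permits.
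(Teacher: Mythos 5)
Your overall strategy is genuinely different from the paper's and can be made to work, but as written it has a gap exactly where you flag it, in step (i). The paper never applies the performance difference lemma to $(\pi,\pi^\star_{\mathrm{insrc}})$ and then changes measure. It instead decomposes
\[
J_{\mathcal{M}_{\mathrm{src}}}(\pi)-J_{\mathcal{M}_{\mathrm{src}}}(\pi^\star_{\mathrm{insrc}})
=\bigl[J_{\mathcal{M}_{\mathrm{src}}}(\mu)-J_{\mathcal{M}_{\mathrm{src}}}(\pi^\star_{\mathrm{insrc}})\bigr]
+\bigl[J_{\mathcal{M}_{\mathrm{src}}}(\pi)-J_{\mathcal{M}_{\mathrm{src}}}(\mu)\bigr],
\]
applies the performance difference lemma to the first bracket with $\mu$ as the roll-out policy---which produces $\mathbb{E}_{s\sim\rho_\mu,a\sim\mu}\left[A_{\pi^\star_{\mathrm{insrc}}}(s,a)\right]$ \emph{exactly}, with no occupancy-measure or action-distribution shift left to control---and lower-bounds the second bracket by $-\mathcal{O}\left(1/(1-\gamma)^2\right)$ via a standard policy-improvement lower bound: $J_{\mathcal{M}_{\mathrm{src}}}(\pi)-J_{\mathcal{M}_{\mathrm{src}}}(\mu)\ge\int_s d_\mu(s)\int_a\pi(a|s)A_\mu(s,a)-\mathcal{O}\left(D_{\mathrm{TV}}^{d_\mu}(\pi,\mu)/(1-\gamma)^2\right)$. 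The sign assumption together with $\int_a\mu(a|s)A_\mu(s,a)=0$ makes the advantage integral nonnegative, and the TV term is controlled by a KL-closeness condition on IQL-style updates (introduced mid-proof) plus Pinsker. The assumption is thus only ever used against $A_\mu$, where it applies directly.

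The gap in your plan is that your change-of-action term involves $A_{\pi^\star_{\mathrm{insrc}}}$, and the first fix you propose---that $A_{\pi^\star_{\mathrm{insrc}}}$ and $A_\mu$ share a sign structure on the support of $\mu$---is false in general and should be abandoned: the in-sample optimal policy can rank actions very differently from the behavior policy. Your second proposed fix is the right one and essentially reconstructs the paper's argument inside the change of measure: write $\sum_a(\pi-\mu)A_{\pi^\star_{\mathrm{insrc}}}=\sum_a(\pi-\mu)A_\mu+\sum_a(\pi-\mu)\left(A_{\pi^\star_{\mathrm{insrc}}}-A_\mu\right)$, drop the first piece (nonnegative by the assumption) when lower-bounding, and absorb the second into the slack using $\sup_{s,a}|A|\le\mathcal{O}\left(r_{\max}/(1-\gamma)\right)$. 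Combined with your $\|\rho_\pi-\rho_\mu\|_1$ bound on the state-shift term, this closes the argument, so your route is viable---but you must commit to that branch, and you should also settle the $1/(1-\gamma)$ bookkeeping: in the paper's proof $\rho_\mu$ is effectively the unnormalized discounted occupancy, which is why no extra $1/(1-\gamma)$ prefactor sits in front of the advantage term in the final bound.
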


\textbf{Remark.} The assumption $\left(\hat\pi(a|s)-\mu(a|s)\right)A_\mu(s,a)\geq0$ indicates that, for any state-action pair $(s,a)$, if $A_\mu(s,a)\geq0$, i.e., the action $a$ shows superiority over average, the policy $\hat\pi$ has a higher probability than $\mu$ to choose action $a$, and vice versa. Theoretically, this assumption guarantees that the learned policy $\hat\pi$ enjoys a better performance than the behavior policy $\mu$ on the source domain (Proposition 4.1 in~\citet{liu2024adaptive}). This is reasonable since the goal of offline RL is to outperform the behavior policy. Furthermore, this assumption can be easily met. If we use IQL~\citep{kostrikov2021offline} to optimize the policy $\hat\pi$, since IQL utilizes exponential advantage weighted imitation learning, it explicitly updates the policy to favor actions with higher advantage values:
\begin{align*}
\pi_{k+1}=\arg\max_{\pi\in\Pi}\mathbb{E}_{(s,a)\in\mathcal{D}}\left[\exp(\alpha\cdot A_{\pi_k}(s,a))\log\pi(a|s)\right],
\end{align*}
then if the learned policy $\pi$ is initialized as $\mu$, after one step of policy update, any $(s,a)$ with $A_\mu(s,a)>0$ will be given more weight for imitation, and vice versa. This naturally satisfies our assumption. Therefore, our assumption is reasonable and easy to meet.

The right-hand side in Equation~\ref{eq:1} consists of \textbf{(1)} the negative in-sample optimal advantage value on the source domain under the behavior policy's state-action distribution, and \textbf{(2)} a bounded term. Proposition~\ref{proposition:2} gives an important insight that the value misalignment can be upper bounded by the negative advantage value under source domain offline data, estimated by the in-sample optimal advantage function on the source domain. Given that $J_{\mathcal{M}_{\text{src}}}(\hat\pi)-J_{\mathcal{M}_{\text{src}}}(\pi^\star_{\text{insrc}})\leq0$ holds since $\hat\pi$ is trained on $\mathcal{D}_\mathrm{src}$ using an in-sample offline RL algorithm, if we want to minimize the value misalignment, we need to maximize $\mathbb{E}_{s\sim\rho_\mu(\cdot),a\sim\mu(\cdot|s)}\left[A_{\pi^\star_\text{insrc}}(s,a)\right]$. This motivates our use of the in-sample optimal advantage function as a quantitative measure for value misalignment, where higher advantage values correspond to lower degrees of value misalignment.

\textbf{Remark.} It is worth noting that VGDF~\citep{xu2024cross} also emphasizes the importance of value alignment for cross-domain RL and leverages the value function to guide data filtering. However, DVDF fundamentally differs from VGDF in how it interprets value alignment. VGDF defines value alignment as minimizing $\left|V(s^\prime_\mathrm{src})-V(s^\prime_\mathrm{tar})\right|$, which quantifies dynamics discrepancy from a value difference perspective. That is, \textbf{VGDF still only addresses dynamics mismatch}. In contrast, DVDF minimizes $\left|J_{\mathcal{M}_\mathrm{src}}(\pi)-J_{\mathcal{M}_\mathrm{src}}(\pi^\star_\mathrm{insrc})\right|$ for value alignment, which explicitly measures policy optimality in the source domain, orthogonal to the dynamics mismatch. Thus, DVDF comprehensively considers both value and dynamics alignment, thereby distinguishing it from VGDF.

\subsection{Practical Implementation}
\label{sec:practical}

In Section~\ref{sec:advantage_value}, we have demonstrated that an in-sample optimal advantage function could be leveraged for capturing value misalignment. The next question is how to obtain such an advantage function. Since we cannot directly acquire the in-sample optimal advantage function, we propose leveraging a pre-trained offline policy trained on the source domain dataset to approximate the in-sample optimal policy, and using its corresponding advantage function to approximate the in-sample optimal advantage function. However, the advantage approximation error is non-negligible and must be minimized. This raises the question of how to perform offline pre-training effectively. We denote the pre-trained policy as $\pi_\text{pre}$, and its advantage function as $A_\text{pre}$. We also obtain an advantage function during pre-training, which we denote by $\hat{A}_\text{pre}$. Note that $A_\text{pre}$ and $\hat{A}_\text{pre}$ are typically different due to the additional conservatism introduced in offline RL. We analyze the advantage approximation error in Proposition~\ref{proposition:3}, which provides guidance on how to properly conduct offline pre-training.


\begin{proposition}
[\textbf{Advantage Approximation Error}]
\label{proposition:3}

Given a pre-trained policy $\pi_{\mathrm{pre}}$ and advantage function $\hat{A}_{\mathrm{pre}}(\cdot)$ to approximate the in-sample optimal policy $\pi^\star_\mathrm{insrc}$ and in-sample optimal advantage function $A_{\pi^\star_\mathrm{insrc}}(\cdot)$, then the advantage approximation error on offline samples generated by the behavior policy $\mu$ is
\begin{equation}
\begin{aligned}
    &\quad \mathbb{E}_{s\sim\rho_\mu(\cdot),a\sim\mu(\cdot|s)}\left[\hat{A}_{\mathrm{pre}}(s,a)-A_{\pi^\star_\mathrm{insrc}}(s,a)\right]= \\ &~~\Delta J_{\mathcal{M}_\mathrm{src}}(\pi^\star_\mathrm{insrc},\pi_\mathrm{pre})+\mathbb{E}_{s\sim\rho_\mu(\cdot),a\sim\mu(\cdot|s)}\left[\Delta(s,a)\right],
\end{aligned}
\end{equation}
with $\Delta J_{\mathcal{M}_\mathrm{src}}(\pi^\star_\mathrm{insrc},\pi_\mathrm{pre})=J_{\mathcal{M}_\mathrm{src}}(\pi_\mathrm{insrc}^\star)-J_{\mathcal{M}_\mathrm{src}}(\pi_\mathrm{pre})$, and $\Delta(s,a)=\hat{A}_{\mathrm{pre}}(s,a)-{A}_{\mathrm{pre}}(s,a)$.
\end{proposition}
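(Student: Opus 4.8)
\textbf{Proof proposal for Proposition~\ref{proposition:3}.}

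The plan is to directly unfold both sides of the claimed identity using only the definition of the advantage function and the performance difference lemma, so the result should follow essentially by bookkeeping. First, I would split the left-hand side by inserting $\pm A_{\mathrm{pre}}(s,a)$ inside the expectation, writing
\[
\mathbb{E}_{s\sim\rho_\mu,\,a\sim\mu(\cdot|s)}\!\left[\hat{A}_{\mathrm{pre}}(s,a)-A_{\pi^\star_\mathrm{insrc}}(s,a)\right]
= \mathbb{E}_{s\sim\rho_\mu,\,a\sim\mu(\cdot|s)}\!\left[A_{\mathrm{pre}}(s,a)-A_{\pi^\star_\mathrm{insrc}}(s,a)\right]
+ \mathbb{E}_{s\sim\rho_\mu,\,a\sim\mu(\cdot|s)}\!\left[\Delta(s,a)\right],
\]
using $\Delta(s,a)=\hat{A}_{\mathrm{pre}}(s,a)-A_{\mathrm{pre}}(s,a)$. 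This immediately produces the second term on the right-hand side, so it remains only to identify the first expectation with $\Delta J_{\mathcal{M}_\mathrm{src}}(\pi^\star_\mathrm{insrc},\pi_\mathrm{pre}) = J_{\mathcal{M}_\mathrm{src}}(\pi^\star_\mathrm{insrc}) - J_{\mathcal{M}_\mathrm{src}}(\pi_\mathrm{pre})$.

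The key step is a performance-difference-lemma identity applied on the source MDP. Recall the standard fact that for any two policies $\pi$ and $\pi'$ on an MDP $\mathcal{M}$, $J_{\mathcal{M}}(\pi) - J_{\mathcal{M}}(\pi') = \frac{1}{1-\gamma}\,\mathbb{E}_{s\sim\rho^\pi_{\mathcal{M}},\,a\sim\pi(\cdot|s)}[A^{\pi'}_{\mathcal{M}}(s,a)]$, where $\rho^\pi_{\mathcal{M}}$ is the (normalized) discounted state-visitation distribution of $\pi$. I would apply this twice on $\mathcal{M}_\mathrm{src}$: once with the pair $(\mu,\pi_\mathrm{pre})$ to express $\mathbb{E}_{s\sim\rho_\mu,\,a\sim\mu}[A_{\mathrm{pre}}(s,a)]$ in terms of $J_{\mathcal{M}_\mathrm{src}}(\mu) - J_{\mathcal{M}_\mathrm{src}}(\pi_\mathrm{pre})$, and once with the pair $(\mu,\pi^\star_\mathrm{insrc})$ to express $\mathbb{E}_{s\sim\rho_\mu,\,a\sim\mu}[A_{\pi^\star_\mathrm{insrc}}(s,a)]$ in terms of $J_{\mathcal{M}_\mathrm{src}}(\mu) - J_{\mathcal{M}_\mathrm{src}}(\pi^\star_\mathrm{insrc})$. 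Subtracting, the $J_{\mathcal{M}_\mathrm{src}}(\mu)$ terms cancel and the common factor $\tfrac{1}{1-\gamma}$ drops out (or is absorbed into the definition of $\Delta J$), leaving exactly $J_{\mathcal{M}_\mathrm{src}}(\pi^\star_\mathrm{insrc}) - J_{\mathcal{M}_\mathrm{src}}(\pi_\mathrm{pre}) = \Delta J_{\mathcal{M}_\mathrm{src}}(\pi^\star_\mathrm{insrc},\pi_\mathrm{pre})$, which completes the proof.

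The main obstacle is a bookkeeping subtlety rather than a conceptual one: the performance difference lemma naturally produces expectations under the discounted visitation distribution $\rho^\mu$ of the \emph{evaluating} policy $\mu$, and one must confirm that the $\rho_\mu(\cdot)$ appearing in the proposition statement is precisely this discounted occupancy (matching the usage in Proposition~\ref{proposition:2}), and that the normalization convention for $\rho_\mu$ and for $J$ is consistent so the $\tfrac{1}{1-\gamma}$ factors and the definition of $\Delta J$ line up exactly. I would state the performance difference lemma explicitly with the convention used in the paper, verify the two instantiations carefully, and then the cancellation is immediate. A secondary point worth a sentence is that $A_{\mathrm{pre}}$ is the \emph{true} advantage of $\pi_\mathrm{pre}$ on $\mathcal{M}_\mathrm{src}$ (so the performance difference lemma applies verbatim), whereas $\hat{A}_\mathrm{pre}$ is the possibly-conservative learned estimate — this is exactly why the residual $\Delta(s,a)$ term is needed and cannot be absorbed.
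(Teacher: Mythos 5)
Your proposal is correct and follows essentially the same route as the paper's proof: split off $\Delta(s,a)$ by inserting $\pm A_{\mathrm{pre}}(s,a)$, then apply the performance difference lemma on $\mathcal{M}_{\mathrm{src}}$ with the pairs $(\mu,\pi_{\mathrm{pre}})$ and $(\mu,\pi^\star_{\mathrm{insrc}})$ and subtract so that $J_{\mathcal{M}_{\mathrm{src}}}(\mu)$ cancels. Your explicit flag about the $\tfrac{1}{1-\gamma}$ normalization of $\rho_\mu$ is a fair point of care that the paper's version silently absorbs, but it does not change the argument.
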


Proposition~\ref{proposition:3} suggests that minimizing the advantage approximation error requires selecting an offline RL algorithm with two properties: (1) strong empirical performance (to minimize $\Delta J_{\mathcal{M}_\mathrm{src}}(\pi^\star_\mathrm{insrc},\pi_\mathrm{pre})$), and (2) accurate advantage estimation (such that $\Delta(s,a)$ is minimized). Although IQL is a natural candidate due to its ability to achieve near in-sample optimal performance across diverse benchmarks and its straightforward advantage estimation, its known tendency for $V$-function underestimation (caused by suboptimal actions)~\citep{xu2023offline, chenactive} may compromise advantage accuracy and consequently mislead data filtering. To address this limitation while maintaining high performance, we instead adopt another offline RL algorithm, Sparse Q-Learning (SQL)~\citep{xu2023offline}, for pre-training. As an in-sample learning algorithm that explicitly enforces policy sparsity, SQL achieves both near in-sample optimal performance and more reliable advantage estimates, thereby better satisfying our dual requirements of algorithmic performance and advantage accuracy.

{After pre-training, we obtain a $Q$-function $\hat{Q}_\mathrm{pre}(s,a)$ and a $V$-function $\hat{V}_\mathrm{pre}(s)$, we directly derive the advantage function as $\hat{A}_\mathrm{pre}(s,a)=\hat{Q}_\mathrm{pre}(s,a)-\hat{V}_\mathrm{pre}(s)$.} Then, we can leverage the pre-trained advantage function as an indicator of value misalignment. The next step is to choose the indicator of the dynamics misalignment. We can just follow previous studies, and apply methods such as contrastive learning~\citep{wen2024contrastive} and optimal transport~\citep{lyu2025cross}. Here, we follow IGDF~\citep{wen2024contrastive} and measure dynamics misalignment via contrastive learning. Specifically, we train a score function $h(s,a,s^\prime)$ via the NCE loss:
\begin{align*}
    \mathcal{L}_{\text{NCE}}=-\mathbb{E}_{(s,a,s^\prime_\text{tar})}\mathbb{E}_{S^\prime_\text{src}}\left[\log\frac{h(s,a,s^\prime_\text{tar})}{\sum_{s^\prime\in {\{s^\prime_\text{tar}\}\cup S^\prime_\text{src}}}h(s,a,s^\prime)}\right],
\end{align*}
{where $S^\prime_\mathrm{src}$ represents the next states from the source dataset.} Intuitively, $h(s,a,s^\prime)$ assigns high scores when $s^\prime\sim P_{\mathcal{M}_\text{tar}}(\cdot|s,a)$, and assigns low scores when $(s,a)\in\mathcal{D}_\text{tar}$ and $s^\prime\in\mathcal{D}_\text{src}$. Hence, $h(s,a,s^\prime)$ can reflect whether the dynamics of the transition $(s,a,s^\prime)$ aligns with the target domain dynamics.

\begin{table*}[t]
    \centering
    \begingroup
    \setlength{\tabcolsep}{11pt}
    \caption{\textbf{Performance comparison under kinematic shifts.} half=halfcheetah, hopp=hopper, walk=walker2d, r=random, m=medium, me=medium-expert, mr=medium-replay, e=expert. We report the normalized score evaluated in the target domain after 1M steps of training, and $\pm$ captures the standard deviation across 5 seeds. We \textbf{bold} the highest scores for each task.}
    \label{tab:mainresults}
    \begin{tabular}{l|ccc|cc|cc}
    \toprule
    \textbf{Dataset} & IQL & BOSA & DARA & IGDF & DVDF-IGDF & OTDF & DVDF-OTDF\\
    \midrule
    half-r & 4.9 & 2.2 & 4.7 & \textbf{5.4$\pm$0.4} & 4.6$\pm$0.1 & \textbf{2.2$\pm$0.2} & 1.7$\pm$0.1 \\
    half-m & \textbf{45.2} & 39.6 & 44.1 & \textbf{45.2$\pm$0.1} & 45.1$\pm$0.2 & 42.2$\pm$0.1 & \textbf{45.4$\pm$0.6} \\
    half-mr & 22.1 & \textbf{26.3} & 21.6 & 22.9$\pm$1.4 & \textbf{26.6$\pm$2.3} & 15.6$\pm$3.1 & \textbf{26.8$\pm$4.4} \\
    half-me & 43.7 & 42.2 & 52.7 & 57.1$\pm$8.9 & \textbf{66.7$\pm$6.3} & \textbf{46.7$\pm$4.4} & 45.9$\pm$3.0 \\
    half-e & 49.7 & \textbf{84.3} & 47.4 & 47.6$\pm$2.1 &\textbf{58.8$\pm$4.7} & 79.6$\pm$3.0 & \textbf{88.9$\pm$5.6} \\
    hopp-r & 4.5 & \textbf{40.7} & 3.8 & \textbf{13.0$\pm$1.9} & 3.3$\pm$0.1 & 2.9$\pm$0.4 & \textbf{12.6$\pm$0.1} \\
    hopp-m & 48.8 & \textbf{71.4} & 48.8 & 54.3$\pm$6.6 & \textbf{59.1$\pm$3.4} & 46.3$\pm$3.7 & \textbf{67.8$\pm$4.1} \\
    hopp-mr & 40.2 & 29.5 & 41.6 & 30.0$\pm$5.2 & \textbf{32.1$\pm$0.8} & 26.2$\pm$4.4 & \textbf{44.7$\pm$2.2} \\
    hopp-me & 12.5 & 49.6 & 17.0 & 11.6$\pm$0.6 & \textbf{60.2$\pm$5.9} & 58.1$\pm$4.9 & \textbf{70.2$\pm$7.7} \\
    hopp-e & 62.6 & 94.8 & 59.1 & 70.1$\pm$3.2 & \textbf{83.9$\pm$5.0} & 97.0$\pm$3.3 & \textbf{111.8$\pm$4.5} \\
    walk-r & 4.0 & 2.2 & 5.1 & 5.2$\pm$0.3 & \textbf{9.8$\pm$1.7} & 0.0$\pm$0.0 & 0.0$\pm$0.0 \\
    walk-m & 48.7 &44.5 & 43.4 & 51.8$\pm$2.4 & \textbf{69.7$\pm$4.4} & 43.0$\pm$2.1 & \textbf{71.6$\pm$5.9} \\
    walk-mr & 12.6 & 4.8 & 15.6 & 11.2$\pm$1.1 & \textbf{22.6$\pm$1.8} & 10.7$\pm$1.9 & \textbf{25.6$\pm$2.4} \\
    walk-me & 95.4 & 35.1 & 85.3 & 90.6$\pm$3.4 & \textbf{104.6$\pm$5.1} & 63.1$\pm$6.6 & \textbf{91.6$\pm$8.2} \\
    walk-e & 90.1 & 41.9 & 85.5 & 93.7$\pm$5.8 & \textbf{108.0$\pm$4.3} & 98.9$\pm$2.1 & \textbf{106.0$\pm$1.2} \\
    ant-r & 11.5 & \textbf{31.5} & 10.9 & 13.7$\pm$1.9 & \textbf{15.6$\pm$2.2} & 11.6$\pm$1.0 & \textbf{25.7$\pm$3.4} \\
    ant-m & 89.9 & 28.4 & \textbf{98.9} & 88.0$\pm$4.6 & \textbf{98.1$\pm$5.0} & 86.1$\pm$3.7 & \textbf{97.1$\pm$5.0} \\
    ant-mr & 46.8 & 22.0 & 42.1 & \textbf{58.2$\pm$9.7} & 44.1$\pm$7.6 & \textbf{39.6$\pm$8.1} & 36.8$\pm$3.9 \\
    ant-me & 106.1 & 102.5 & 104.8 & 112.8$\pm$4.0 & \textbf{126.6$\pm$7.4} & 105.1$\pm$3.9 & \textbf{117.2$\pm$6.1} \\
    ant-e & 111.0 & 57.6 & 115.1 & 119.2$\pm$5.6 & \textbf{125.2$\pm$3.9} & \textbf{111.6$\pm$2.9} & 107.9$\pm$4.0 \\ 
    \midrule
    \textbf{Total} & 950.3 & 851.1 & 947.5 & 1001.6 & \textbf{1164.7} & 986.5 & \textbf{1172.3} \\ 
    \bottomrule
    \end{tabular}
    \endgroup

\end{table*}

Based on the pre-trained advantage function $\hat{A}_{\text{pre}}(\cdot)$ and score function $h(\cdot)$, we propose a practical algorithm, termed DVDF (\textbf{D}ynamics- and \textbf{V}alue-aligned \textbf{D}ata \textbf{F}iltering), which selectively shares source domain data with smaller dynamics and value misalignment to train a target policy. Specifically, we define a new score function $g(s,a,s^\prime)$ as follows:
\begin{equation}
\label{eq:g}
    g(s,a,s^\prime) = \lambda\cdot h(s,a,s^\prime) + (1-\lambda)\cdot \text{Norm}({\hat{A}_{\text{pre}}(s,a)}),
\end{equation}
where $\lambda$ is a tunable hyperparameter, and $\text{Norm}(\cdot)$ is the min-max normalization operator. $g(\cdot)$ balances value and dynamics misalignment through a simple weighted summation strategy. \textbf{This design directly aligns with our theoretical results in Proposition~\ref{proposition:1},} which also combines the two terms via a weighted summation. Then, we extract the top $\xi$-quantile of batch source samples for training, and weigh the Temporal-Difference (TD)-error of selected source data using the score function as in~\citep{wen2024contrastive}:
\begin{equation}
\begin{aligned}
        &\mathcal{L}_{Q}(\theta)=\frac{1}{2}\mathbb{E}_{(s,a,s^\prime)\sim\mathcal{D}_\text{tar}}\left[(Q_\theta-\mathcal{T}Q_\theta)^2\right]+
        \\ &~~\frac{1}{2} \mathbb{E}_{(s,a,s^\prime)\sim\mathcal{D}_\text{src}}\left[w(s,a,s^\prime)g(s,a,s^\prime)\right(Q_\theta-\mathcal{T}Q_\theta)^2],
\end{aligned}
\end{equation}
where $w(s,a,s^\prime) = \mathbb{I}(g(s,a,s^\prime)>g_{\xi\%})$ is an indicator function, {and $g_{\xi\%}$ means the $\xi$-th quantile of the $g$-values among source domain samples in a batch.} The last step is to update the policy via offline RL algorithms such as IQL. Note that DVDF can serve as a plug-in module and can be combined with different cross-domain offline RL algorithms such as IGDF and OTDF~\citep{lyu2025cross}, yielding DVDF-IGDF and DVDF-OTDF. We present the detailed algorithm procedure of DVDF-IGDF and DVDF-OTDF in Appendix~\ref{appendix:pseudocode}.



\section{Experiments}

In this section, we examine the effectiveness of our proposed method by conducting extensive experiments on environments with various challenging dynamics shifts. We compare the performance of DVDF and other baselines in Section~\ref{sec:main}, and empirically show that DVDF achieves effective offline policy adaptation and consistently outperforms prior strong baselines across varied dynamics shifts and dataset qualities. In Section~\ref{sec:ablation} and Section~\ref{sec:parameter}, we conduct a detailed ablation study and parameter study for a better understanding of DVDF.

\subsection{Main Results under Various Dynamics Shifts}
\label{sec:main}

\begin{figure*}
    \centering
	\subfigure[Performance comparison with SQL and IQL pre-trained advantage function.] {\includegraphics[width=.98\textwidth]{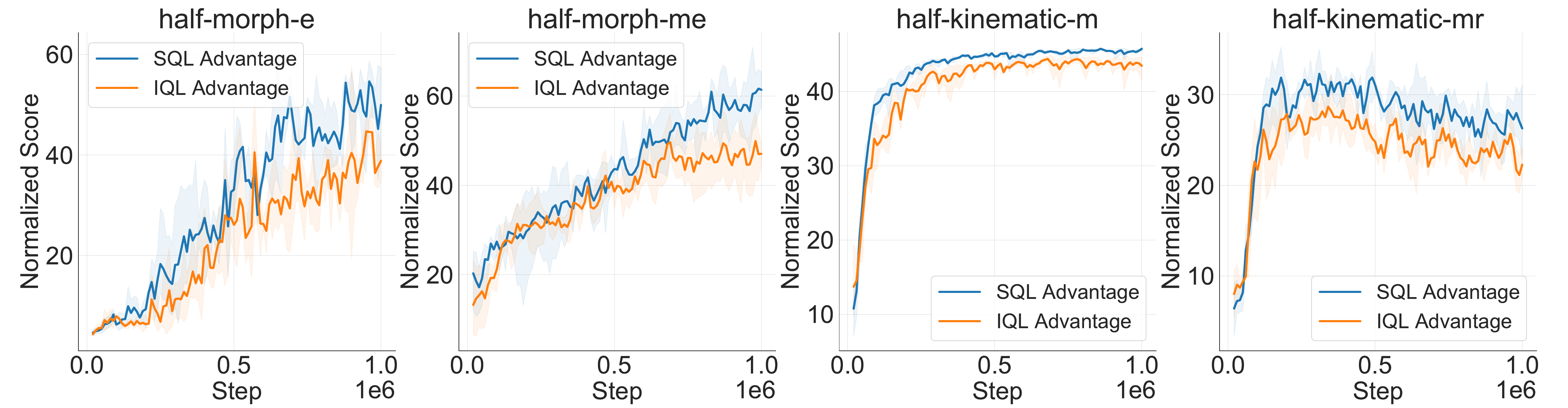}}
	\subfigure[Advantage estimation error comparison with SQL and IQL pre-trained advantage function.] {\includegraphics[width=.98\textwidth]{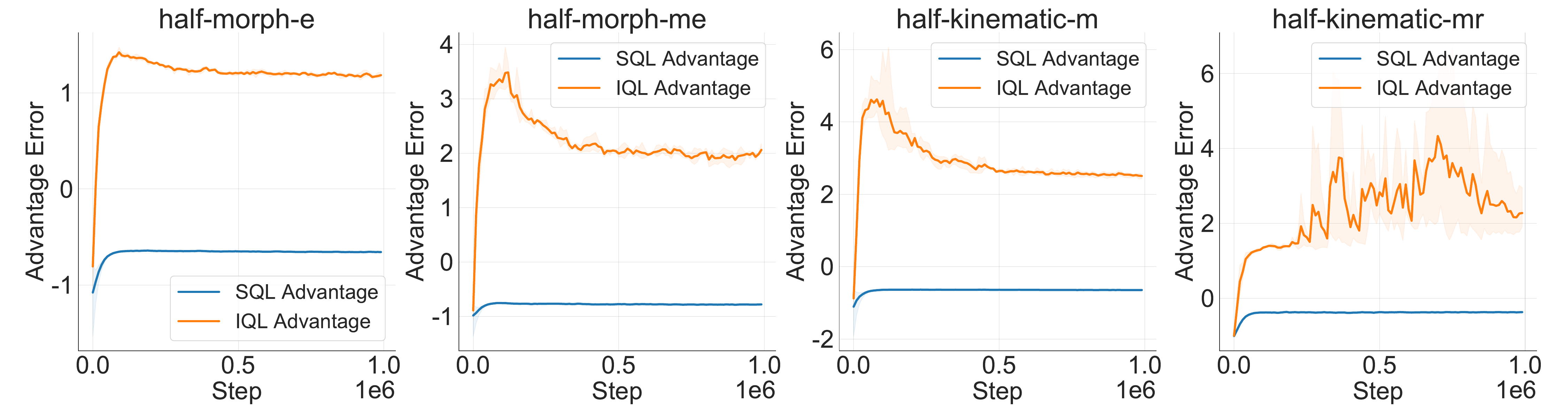}
     }
    \caption{Ablation study on SQL pre-trained advantage function.}
    \label{fig:ablation_main}
\end{figure*}

\textbf{Tasks and Datasets.} For the types of dynamics shifts, we consider kinematic shifts and morphology shifts in this paper, and the dynamics shifts are applied to four tasks (\texttt{halfcheetah}, \texttt{hopper}, \texttt{walker2d}, \texttt{ant}) from OpenAI Gym~\citep{brockman2016openai}. The kinematic shifts are realized by reducing the rotation range of some joints, and the morphology shifts are simulated by modifying the size of some limbs. We defer more details for the realization of dynamics shifts to Appendix~\ref{appendix:realizationkinematic} and~\ref{appendix:realizationmorph}. Since only a limited amount of target data is accessible, we can sample a percentage of data from offline datasets from D4RL~\citep{fu2020d4rl} as the target domain datasets. We set the percentage to $10\%$ in our experiments. For source domain datasets, we collect data in the modified environments, following a similar data collection process as D4RL. Specifically, we collect datasets of five data qualities (\texttt{random}, \texttt{medium}, \texttt{medium-replay}, \texttt{medium-expert}, \texttt{expert}) with an SAC~\citep{haarnoja2018soft} agent trained to different levels of performance in the respective environments, and each source domain dataset contains around 1M samples, much more than the target domain datasets. This amounts to a total of \textbf{40} source domain datasets and \textbf{20} target domain datasets. Note that for each pair of source and target domain datasets, the type of tasks and dataset quality remain the same, and the difference lies in the transition dynamics and the dataset size. We also examine DVDF in extremely low-data settings~\citep{lyu2025cross} where the target dataset contains only 5,000 transitions. We defer the results in Appendix~\ref{appendix:extended}.

\textbf{Baselines.} We choose the following baselines for comparison: \textbf{IQL}~\citep{kostrikov2021offline} (which we train directly on the mixture of source domain data and target domain data). \textbf{BOSA}~\citep{liu2024beyond}, \textbf{DARA}~\citep{liu2022dara}, \textbf{IGDF}~\citep{wen2024contrastive} and \textbf{OTDF}~\citep{lyu2025cross}. The backbone of IGDF and OTDF is IQL. We exclude VGDF~\citep{xu2024cross} as our baseline since it requires an online target environment. More details about these baselines are presented in Appendix~\ref{appendix:baseline}. For our method, we implement DVDF-IGDF and DVDF-OTDF for comparison.

\begin{figure*}
    \centering
	\subfigure[Effect of $\lambda$] {\includegraphics[width=.98\textwidth]{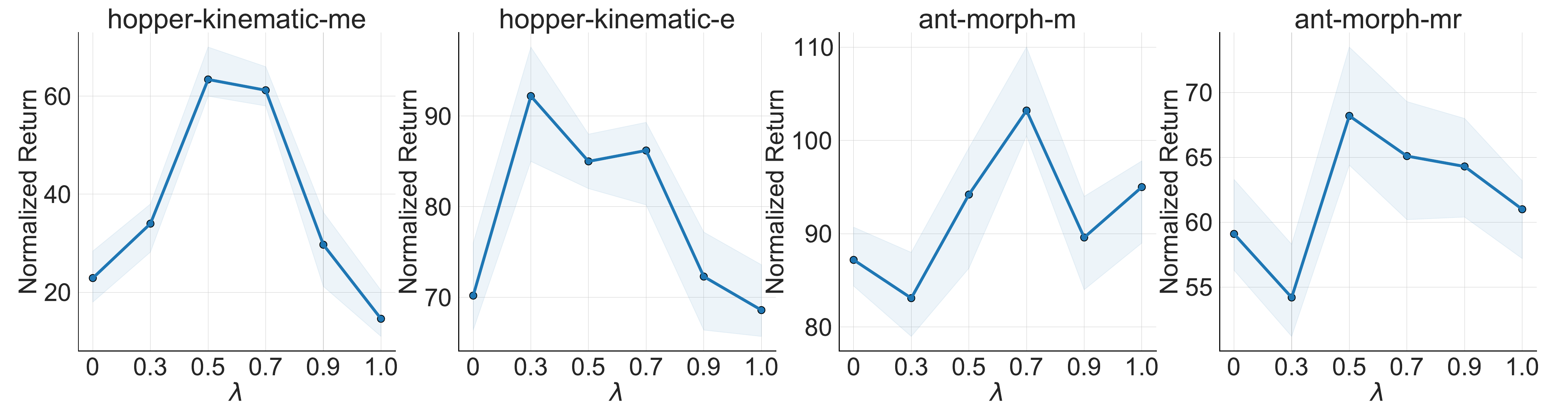}}
	\subfigure[Effect of $\xi$] {\includegraphics[width=.98\textwidth]{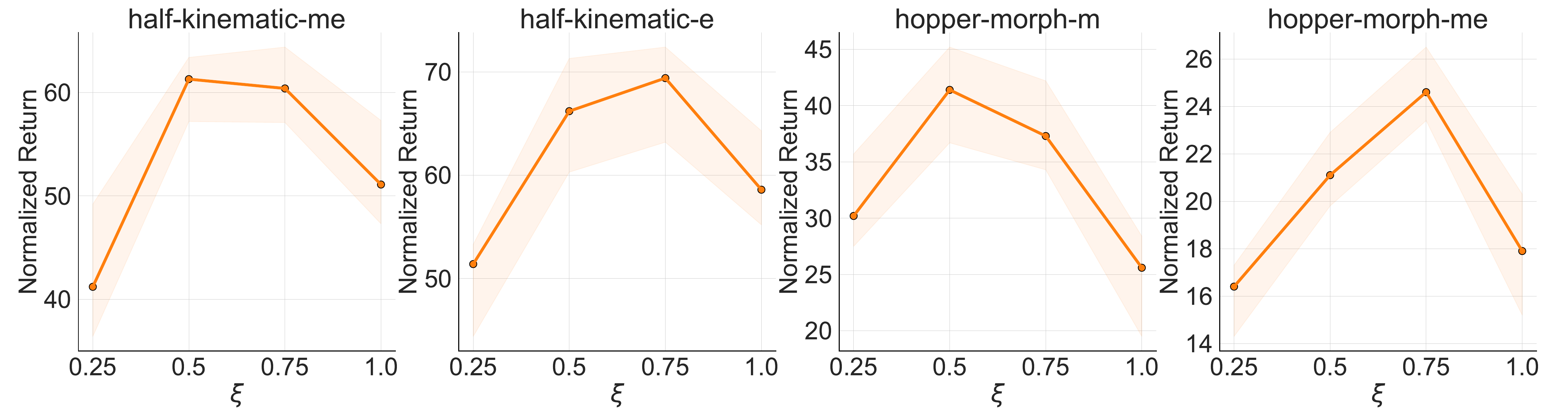}
     }
    \caption{Parameter sensitivity experiments on $\lambda$ and $\xi$.}
    
    \vspace{-0.35cm}
    \label{fig:parameter}
\end{figure*}

\textbf{Experimental Results.} We present the comparison results for each method under kinematic shifts in Table~\ref{tab:mainresults}. Due to space limit, the results under morphology shifts are deferred to Appendix~\ref{appendix:morphresults}. We report the normalized score in the target domain. Empirical results demonstrate that DVDF consistently enhances the performance of base algorithms (IGDF and OTDF) while outperforming more baselines (IQL, BOSA, and DARA) across diverse tasks and dataset qualities under kinematic shifts. Notably, DVDF-IGDF surpasses IGDF on \textbf{16} out of 20 tasks under kinematic shifts, and DVDF-OTDF achieves a higher score than OTDF on \textbf{15} out of 20 tasks under kinematic shifts. Adopting DVDF incurs an increase of $\textbf{16.3\%}$ (from 1001.6 to 1164.7) and $\textbf{18.8\%}$ (from 986.5 to 1172.3) to IGDF and OTDF respectively, in terms of the total normalized score under kinematic shifts. Moreover, DVDF shows superiority especially on datasets containing high-quality samples (\texttt{medium-expert} and \texttt{expert}), outperforming IGDF on \textbf{8} out of 8 tasks and OTDF on \textbf{6} out of 8 tasks. We attribute this to DVDF's ability to select dynamics- and value-aligned source domain data, whereas IGDF and OTDF may discard substantial value-aligned data, which inevitably limits their potential for effective policy transfer.

\subsection{Ablation Study}
\label{sec:ablation}

In this section, we examine the necessity of SQL pre-training for obtaining the advantage function. We pre-train both SQL and IQL, and apply the resulting advantage functions for data filtering. Experiments are conducted on four datasets using DVDF-IGDF as the base algorithm, with other settings consistent with Section~\ref{sec:main}. We compare the two pre-training methods in terms of algorithm performance and advantage estimation error.

\textbf{Performance Comparison.} {Figure~\ref{fig:ablation_main} (a) shows the learning curves and performance comparison on four datasets using SQL and IQL pre-trained advantage functions. Clearly, employing the SQL pre-trained advantage function for data filtering yields better performance than using IQL. Therefore, we use the advantage function obtained by SQL pre-training for data filtering in our experiments.}

\textbf{Advantage Estimation Error Comparison.} {We next examine whether the performance improvement from SQL pre-training stems from a more accurate advantage function. We define the advantage estimation error as $\mathcal{E}=\mathbb{E}_{(s,a)\in\mathcal{D}_\mathrm{src}}\frac{\hat{A}(s,a)-A(s,a)}{A(s,a)}$, where $\hat{A}(\cdot)$ is the estimated advantage function by either SQL or IQL, and $A(\cdot)$ is the true advantage function obtained by Monte Carlo rollouts. Figure~\ref{fig:ablation_main} (b) shows the advantage estimation error comparison during pre-training. We observe that IQL quickly overestimates the advantage value, consistent with~\cite{xu2023offline}, noting that IQL tends to underestimate the 
$V$-function, thereby inflating the estimated advantage. In contrast, SQL typically underestimates the advantage due to the sparsity induced in $V$-function learning. Nevertheless, SQL maintains a smaller estimation error than IQL throughout pre-training, indicating that SQL provides more accurate advantage estimation.}




\subsection{Parameter Sensitivity}
\label{sec:parameter}
In this section, we investigate the sensitivity of DVDF to the hyperparameters. There are two main hyperparameters in DVDF: the data selection ratio $\xi$ and the alignment tradeoff coefficient $\lambda$. We choose DVDF-IGDF as our base algorithm, running for 1M steps with 10 random seeds. The dataset setting follows Section~\ref{sec:main}.

\textbf{Alignment tradeoff coefficient $\lambda$.} {The parameter $\lambda$ balances the weight of dynamics alignment and value alignment during data filtering. A larger $\lambda$ emphasizes more on dynamics alignment, and vice versa. We vary $\lambda$ across $\{0.0, 0.3,0.5,0.7,0.9,1.0\}$ and conduct experiments on four different tasks. Figure~\ref{fig:parameter} (a) shows the impact of different values of $\lambda$ on the final performance, which indicates that neither excessive emphasis on dynamics alignment nor value alignment represents the best choice, and $\lambda=0.7$ could achieve an effective trade-off between dynamics and value alignment. Therefore, we fix $\lambda=0.7$ across all datasets in our experiments without further tuning.}

\textbf{Data selection ratio $\xi$.} {The parameter $\xi$ decides how many source domain samples can be shared. A larger $\xi$ implies more source domain samples are accepted. To examine its influence, we conduct experiments on four tasks. We sweep $\xi$ across $\{0.25, 0.5, 0.75, 1.0\}$ and present the final performance comparison in Figure~\ref{fig:parameter} (b). We observe an inferior performance when $\xi=0.25$ or $\xi=1.0$, and setting $\xi=0.5$ could achieve a favorable result on most tasks. Therefore, we set $\xi=0.5$ uniformly for DVDF in all the experiments, instead of performing task-specific tuning.}

\section{Conclusion}
In this paper, through empirical and theoretical analyses, we demonstrate that both dynamics alignment and value alignment are critical for efficient cross-domain offline RL. Building upon this insight, we propose a novel method, DVDF, which leverages a pretrained advantage function to quantify value misalignment and performs data filtering by jointly considering dynamics and value misalignment, filling a gap in prior research. Extensive experiments across various dynamics shift scenarios and tasks demonstrate that DVDF outperforms prior strong baselines and brings significant performance improvement to base algorithms.

\section*{Impact Statement}

This paper presents work whose goal is to advance the field of Machine
Learning. There are many potential societal consequences of our work, none
of which we feel must be specifically highlighted here.

\nocite{langley00}

\bibliography{example_paper}
\bibliographystyle{icml2026}

\newpage
\appendix
\onecolumn

\section{Related Works}

\textbf{Offline RL.} Typical offline RL~\citep{levine2020offline, prudencio2023survey} assumes only access to a static dataset collected in the target environment. Value overestimation may occur in offline RL due to the OOD action issue~\citep{kumar2020conservative, fujimoto2019off, fujimoto2021minimalist}. Common solutions for this issue include conservative value estimation~\citep{kumar2020conservative, lyu2022mildly, nikulin2023anti, cheng2022adversarially, jin2021is, zhang2024pessimism, Rashidinejad2022bridging}, adding policy constraints~\citep{kumar2019stabilizing, fujimoto2021minimalist, wu2021uncertainty}, and augmenting the dataset with dynamics models~\citep{yu2020mopo, yu2021combo, kidambi2020morel,qiao2026romi,qiao2025sumo}. Our focus is different from these works since we leverage data from another source domain for policy learning.

\textbf{Domain Adaptation in RL.} In this work, we investigate the cross-domain policy adaptation problem under dynamics shifts~\citep{xu2024cross, lyu2024cross, xue2023state,yan2026cross,qiao2026droco}, while keeping other MDP components unchanged. Previous studies for this problem include domain randomization~\citep{slaoui2019robust, mehta2020active}, system identification~\citep{clavera2018learning, du2021auto}, imitation learning~\citep{chae2022robust, kim2020domain}, and meta RL~\citep{finn2017model, nagabandi2018learning}. However, these methods require a manipulable simulator or expert trajectories from the target domain. Recent works~\citep{pan2024survive, guo2024off, guo2025mobody, wang2024return, niu2022trust, niu2023h2o+} dismiss these limitations and study the setting where limited target domain data and sufficient source domain data are available, either online or offline. Instead, we focus on a cross-domain offline RL setting, where both source and target domain data are offline. In this setting, recent studies include reward modification through domain classifier~\citep{liu2022dara} or decision transformer~\citep{wang2024return}, utilizing supported value optimization~\citep{liu2024beyond}, leveraging contrastive representation~\citep{wen2024contrastive} or optimal transport~\citep{lyu2025cross} for data filtering, and so on. {In addition, PSEC~\citep{liuskill} achieves effective policy adaptation by dynamically composing the parameters of pre-trained source and target domain policies, DmC~\citep{le2025dmc} employs a KNN-based estimator as a measure of dynamics gap, and utilizes the KNN proximity score as a guiding signal for diffusion-based data augmentation.} These works have primarily focused on dynamics alignment while neglecting the critical role of value alignment in the source domain. In contrast, DVDF jointly considers dynamics and value alignment, filling this gap in prior research.

\section{Proofs of Theoretical Results}
\label{appendix:proof}
In this section, we provide the detailed proofs of the theoretical results in the main text.

\subsection{Proof of Lemma~\ref{lemma:1}}





\begin{proof} The proof starts with
\begin{equation*}
    \begin{aligned}
    \left|J_{\mathcal{M}_{\text{tar}}}(\pi)-J_{\mathcal{M}_{\text{src}}}(\pi)\right| &= \left|\frac{\gamma}{1-\gamma}\mathbb{E}_{s,a\sim \rho^\pi_{\mathcal{M}_\text{tar}}}\left[\mathbb{E}_{s^\prime\sim P_\text{tar}}\left[V_{\mathcal{M}_\text{src}}^\pi(s^\prime)\right]-\mathbb{E}_{s^\prime\sim P_\text{src}}\left[V_{\mathcal{M}_\text{src}}^\pi(s^\prime)\right]\right]\right| \\
    &= \left|\frac{\gamma}{1-\gamma}\mathbb{E}_{s,a\sim \rho^\pi_{\mathcal{M}_\text{tar}}}\left[\int_{s^\prime}\left(P_\text{tar}(s^\prime|s,a)-P_\text{src}(s^\prime|s,a)\right)V_{\mathcal{M}_\text{src}}^\pi(s^\prime)\right]\right| \\
    &\leq \frac{\gamma}{1-\gamma}\mathbb{E}_{s,a\sim \rho^\pi_{\mathcal{M}_\text{tar}}}\left[\int_{s^\prime}\left|P_\text{tar}(s^\prime|s,a)-P_\text{src}(s^\prime|s,a)\right|V_{\mathcal{M}_\text{src}}^\pi(s^\prime)\right] \\
    &\leq \frac{\gamma\cdot r_{\max}}{(1-\gamma)^2}\mathbb{E}_{s,a\sim \rho^\pi_{\mathcal{M}_\text{tar}}}\left[\int_{s^\prime}\left|P_\text{tar}(s^\prime|s,a)-P_\text{src}(s^\prime|s,a)\right|\right],
\end{aligned}
\end{equation*}
where the first equality holds from the telescoping lemma~\citep{xu2018algorithmic}. Moreover, by the definition of total variance distance, we have
\begin{equation*}
    \begin{aligned}
    &\frac{\gamma\cdot r_{\max}}{(1-\gamma)^2}\mathbb{E}_{s,a\sim \rho^\pi_{\mathcal{M}_\text{tar}}}\left[\int_{s^\prime}\left|P_\text{tar}(s^\prime|s,a)-P_\text{src}(s^\prime|s,a)\right|\right]\\
    &\qquad=\frac{2\gamma r_{\max}}{(1-\gamma)^2}\mathbb{E}_{s,a\sim \rho^\pi_{\mathcal{M}_\text{tar}}}\left[D_{\mathrm{TV}}(P_\text{tar}(\cdot|s,a),P_\text{src}(\cdot|s,a))\right] \\
    &\qquad\leq \frac{2\gamma r_{\max}}{(1-\gamma)^2}\cdot \sup_{s,a}\left[D_{\mathrm{TV}}(P_\text{tar}(\cdot|s,a),P_\text{src}(\cdot|s,a))\right].
\end{aligned}
\end{equation*}
Combining the above two inequalities completes the proof.
\end{proof}

\textbf{Remark.} Let $C_1=\frac{2\gamma r_\mathrm{max}}{(1-\gamma)^2}$, which scales as $\mathcal{O}(\frac{1}{(1-\gamma)^2})$. This constant could be further reduced to $\mathcal{O}(\frac{1}{1-\gamma})$ to ensure a tighter performance bound, under the Lipschitz continuity assumption. Specifically, we introduce the following assumption and corollary.
\begin{assumption}[Lipschitz Continuity.] 
\label{assumption:lipschitz}
The learned $V$-function is $K_V$-Lipschitz, w.r.t. state $s$, i.e., $\forall s_1,s_2\in\mathcal{S}$, $|V(s_1)-V(s_2)|\leq K_V\left\|s_1-s_2\right\|$.
\end{assumption}

\begin{corollary}[Tighter Performance Bound.] Under Assumption~\ref{assumption:lipschitz}, the performance difference of a policy $\pi$ under $\mathcal{M}_\mathrm{src}$ and $\mathcal{M}_\mathrm{tar}$ admits a tighter bound as
\begin{align*}
    \left|J_{\mathcal{M}_\mathrm{src}}(\pi)-J_{\mathcal{M}_\mathrm{tar}}(\pi)\right|\leq C\cdot \sup_{s,a}\left[D_{TV}(P_\mathrm{src}(\cdot|s,a),P_\mathrm{tar}(\cdot|s,a))\right],
\end{align*}
where $C=\frac{\gamma}{1-\gamma}\cdot K_V$.
\end{corollary}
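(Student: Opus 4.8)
The plan is to follow the proof of Lemma~\ref{lemma:1} verbatim up to the point where the telescoping (simulation) lemma has been applied, and then replace the crude step $\|V^\pi_{\mathcal{M}_\mathrm{src}}\|_\infty\le r_{\max}/(1-\gamma)$ by one that exploits the regularity granted by Assumption~\ref{assumption:lipschitz}. Concretely, the telescoping lemma gives
\begin{equation*}
\left|J_{\mathcal{M}_\mathrm{src}}(\pi)-J_{\mathcal{M}_\mathrm{tar}}(\pi)\right| = \frac{\gamma}{1-\gamma}\left|\mathbb{E}_{s,a\sim\rho^\pi_{\mathcal{M}_\mathrm{tar}}}\left[\int_{s'}\left(P_\mathrm{tar}(s'|s,a)-P_\mathrm{src}(s'|s,a)\right)V^\pi_{\mathcal{M}_\mathrm{src}}(s')\right]\right| ,
\end{equation*}
so the whole task reduces to bounding, uniformly in $(s,a)$, the inner integral $I(s,a):=\int_{s'}\left(P_\mathrm{tar}(s'|s,a)-P_\mathrm{src}(s'|s,a)\right)V^\pi_{\mathcal{M}_\mathrm{src}}(s')$.

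The key step is to notice that $\int_{s'}\left(P_\mathrm{tar}(s'|s,a)-P_\mathrm{src}(s'|s,a)\right)=0$, so $V^\pi_{\mathcal{M}_\mathrm{src}}$ may be shifted by an arbitrary constant inside $I(s,a)$ without changing its value; subtracting $V^\pi_{\mathcal{M}_\mathrm{src}}(s)$ gives $I(s,a)=\int_{s'}\left(P_\mathrm{tar}(s'|s,a)-P_\mathrm{src}(s'|s,a)\right)\left(V^\pi_{\mathcal{M}_\mathrm{src}}(s')-V^\pi_{\mathcal{M}_\mathrm{src}}(s)\right)$. Since Assumption~\ref{assumption:lipschitz} makes $V^\pi_{\mathcal{M}_\mathrm{src}}/K_V$ a $1$-Lipschitz test function, I would invoke Kantorovich--Rubinstein duality --- equivalently, pass to any coupling $(X,Y)$ of the two next-state laws and use $|V^\pi_{\mathcal{M}_\mathrm{src}}(X)-V^\pi_{\mathcal{M}_\mathrm{src}}(Y)|\le K_V\|X-Y\|$ --- to get $|I(s,a)|\le K_V\,W_1\!\left(P_\mathrm{tar}(\cdot|s,a),P_\mathrm{src}(\cdot|s,a)\right)$, where $W_1$ is the $1$-Wasserstein distance. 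Using the maximal coupling, which coincides off an event of probability $D_{\mathrm{TV}}\!\left(P_\mathrm{tar}(\cdot|s,a),P_\mathrm{src}(\cdot|s,a)\right)$, together with the (standard, for bounded continuous-control state spaces) bound on the one-step displacement $\|X-Y\|$, this is at most $K_V\,D_{\mathrm{TV}}\!\left(P_\mathrm{tar}(\cdot|s,a),P_\mathrm{src}(\cdot|s,a)\right)$. Substituting back, taking $\sup_{s,a}$ and pulling it outside the expectation against the probability measure $\rho^\pi_{\mathcal{M}_\mathrm{tar}}$, and multiplying by $\gamma/(1-\gamma)$ gives exactly the claimed inequality with $C=\gamma K_V/(1-\gamma)$.

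The step I expect to be the real obstacle is the conversion in the last move: $I(s,a)$ is naturally controlled by $W_1$ (which measures both how much mass is moved and how far it moves), whereas the stated bound is in terms of $D_{\mathrm{TV}}$ (which only sees how much mass is moved), so the conversion is only legitimate under a boundedness hypothesis --- e.g. a bounded state space, or a uniform bound on $\|s'-s\|$ under either kernel --- that is implicit in the setting of Assumption~\ref{assumption:lipschitz} but is not spelled out. I would therefore either add that hypothesis explicitly, or, more cleanly, state the corollary directly with $W_1\!\left(P_\mathrm{src}(\cdot|s,a),P_\mathrm{tar}(\cdot|s,a)\right)$ in place of $D_{\mathrm{TV}}$, in which case the first two paragraphs already constitute a complete proof requiring no boundedness assumption. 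In either formulation the improvement over Lemma~\ref{lemma:1} --- replacing $2\|V\|_\infty=2r_{\max}/(1-\gamma)$ by $K_V$, hence $\mathcal{O}(1/(1-\gamma)^2)\to\mathcal{O}(1/(1-\gamma))$ --- comes entirely from the centering trick plus Lipschitzness applied to the inner integral.
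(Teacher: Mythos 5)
Your argument is the right one, and it is more than the paper itself provides: the paper's ``proof'' of this corollary is a one-line deferral to Theorem 4.5 of the cited reference, so your write-up is essentially a reconstruction of that argument. The two load-bearing steps --- the centering trick (legitimate because $P_{\mathrm{tar}}(\cdot|s,a)-P_{\mathrm{src}}(\cdot|s,a)$ integrates to zero) followed by Kantorovich--Rubinstein to get $|I(s,a)|\le K_V\,W_1$ --- are correct and are exactly what replaces the factor $2\|V\|_\infty=2r_{\max}/(1-\gamma)$ from Lemma~\ref{lemma:1} by $K_V$, yielding the improvement from $\mathcal{O}(1/(1-\gamma)^2)$ to $\mathcal{O}(1/(1-\gamma))$.

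The obstacle you flag in the final step is a genuine defect of the corollary as stated, not of your proof. The passage from $W_1$ to $D_{\mathrm{TV}}$ via the maximal coupling costs a factor of $\mathrm{diam}(\mathcal{S})$ (more precisely, a uniform bound on $\|X-Y\|$ on the event where the coupled next states differ), so the claimed constant $C=\gamma K_V/(1-\gamma)$ is only valid when that diameter is at most $1$. Without such a bound the statement is false: take $P_{\mathrm{src}}(\cdot|s,a)=\delta_x$ and $P_{\mathrm{tar}}(\cdot|s,a)=\delta_y$ with $\|x-y\|=R$ and $V$ affine with slope $K_V$; then $D_{\mathrm{TV}}=1$ while $|I(s,a)|=K_V R$, so no constant independent of $R$ can work. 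Your two proposed fixes are both the right ones --- either state the bound with $\sup_{s,a}W_1\bigl(P_{\mathrm{src}}(\cdot|s,a),P_{\mathrm{tar}}(\cdot|s,a)\bigr)$, in which case your first two paragraphs are a complete and assumption-free proof, or add the explicit hypothesis $\mathrm{diam}(\mathcal{S})\le 1$ (or absorb the diameter into $C$). Also note that Assumption~\ref{assumption:lipschitz} is stated for ``the learned $V$-function,'' whereas the quantity appearing in the telescoping identity is the true $V^\pi_{\mathcal{M}_{\mathrm{src}}}$; the assumption should be imposed on the latter for the argument to go through as written.
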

\begin{proof}
    The conclusion can be directly obtained by following the proof procedure of Theorem 4.5 of~\citet{ji2022update}.
\end{proof}

\subsection{Proof of Proposition~\ref{proposition:1}}



    

\begin{proof}
    We first decompose the desired performance bound into four parts:
    \begin{align*}
        \begin{aligned}
            &\left|J_{\mathcal{M}_\text{tar}}(\hat\pi)-J_{\mathcal{M}_\text{tar}}(\pi^\star_\text{tar})\right|\\ &= |\left(J_{\mathcal{M}_\text{src}}(\hat\pi)-J_{\mathcal{M}_\text{src}}(\pi^\star_\text{insrc})\right)+\left(J_{\mathcal{M}_\text{tar}}(\hat\pi)-J_{\mathcal{M}_\text{src}}(\hat\pi)\right)+\left(J_{\mathcal{M}_\text{src}}(\pi^\star_\text{src})-J_{\mathcal{M}_\text{tar}}(\pi^\star_\text{tar})\right)\\
            &+\left(J_{\mathcal{M}_\text{src}}(\pi^\star_\text{insrc})-J_{\mathcal{M}_\text{src}}(\pi^\star_\text{src})\right)| \\
            &\leq \underbrace{\left|J_{\mathcal{M}_\text{src}}(\hat\pi)-J_{\mathcal{M}_\text{src}}(\pi^\star_\text{insrc})\right|}_{{(\text{I})}} + \underbrace{\left|J_{\mathcal{M}_\text{tar}}(\hat\pi)-J_{\mathcal{M}_\text{src}}(\hat\pi)\right|}_{(\text{II})}+\underbrace{\left|J_{\mathcal{M}_\text{src}}(\pi^\star_\text{src})-J_{\mathcal{M}_\text{tar}}(\pi^\star_\text{tar})\right|}_{(\text{III})}\\
            &+\underbrace{\left|J_{\mathcal{M}_\text{src}}(\pi^\star_\text{insrc})-J_{\mathcal{M}_\text{src}}(\pi^\star_\text{src})\right|}_{(\text{IV})}.
        \end{aligned}
    \end{align*}

Part (\text{I}) is exactly the desired term (a), i.e., the sub-optimality on the source domain. To get term (b), we need to bound parts (\text{II}) and (\text{III}), respectively.

We first bound part (\text{II}). By directly using Lemma~\ref{lemma:1}, we have
\begin{align*}
    \begin{aligned}
        (\text{II})&\coloneqq\left|J_{\mathcal{M}_\text{tar}}(\hat\pi)-J_{\mathcal{M}_\text{src}}(\hat\pi)\right| \\
        &\leq \frac{2\gamma r_{\max}}{(1-\gamma)^2}\cdot \sup_{s,a}\left[D_{\mathrm{TV}}(P_\text{tar}(\cdot|s,a),P_\text{src}(\cdot|s,a))\right].
    \end{aligned}
\end{align*}

Next, we bound part (\text{III}), i.e., the performance discrepancy between the optimal policy of two different MDPs.

For part (\text{III}), according to the definition of $J_\mathcal{M}(\hat\pi)$, we have $J_\mathcal{M}(\hat\pi)=V^{\hat\pi}_{\mathcal{M},h=0}(s)\coloneqq\mathbb{E}_{s\sim\rho_\mathcal{M}}[V_\mathcal{M}^{\hat\pi}(s)]$. To get the performance bound between two optimal policies in two MDPs, we can turn to analyze the optimal value difference of two MDPs at horizon 0 as
\begin{equation}
        \label{eq:value_diff}
        (\text{III})\coloneqq\left|V^\star_{\mathcal{M}_\text{src},h=0}(s)-V^\star_{\mathcal{M}_\text{tar},h=0}(s)\right|.
\end{equation}

To analyze Equation~\ref{eq:value_diff}, we first consider the value difference at step $h-1$:

\begin{equation*}
    \begin{aligned}
        &V^\star_{\text{src},h-1}(s)-V^\star_{\text{tar},h-1}(s) \\
        &=\max_{a\in\mathcal{A}}\int_{s^\prime}P_\text{src}(s^\prime|s,a)\left(r(s,a)+\gamma V^\star_{\text{src},h}(s^\prime)\right)-\max_{a\in\mathcal{A}}\int_{s^\prime}P_\text{tar}(s^\prime|s,a)\left(r(s,a)+\gamma V^\star_{\text{tar},h}(s^\prime)\right)\\
        &=\int_{s^\prime}P_\text{src}(s^\prime|s,a_1)\left(r(s,a_1)+\gamma V^\star_{\text{src},h}(s^\prime)\right)-\int_{s^\prime}P_\text{tar}(s^\prime|s,a_2)\left(r(s,a_2)+\gamma V^\star_{\text{tar},h}(s^\prime)\right)\\
        &\leq \int_{s^\prime}P_\text{src}(s^\prime|s,a_1)\left(r(s,a_1)+\gamma V^\star_{\text{src},h}(s^\prime)\right)-\int_{s^\prime}P_\text{tar}(s^\prime|s,a_1)\left(r(s,a_1)+\gamma V^\star_{\text{tar},h}(s^\prime)\right)\\
        &=\int_{s^\prime}\left(P_\text{src}(s^\prime|s,a_1)-P_\text{tar}(s^\prime|s,a_1)\right) r(s,a_1) + \gamma \int_{s^\prime}\left(P_\text{src}(s^\prime|s,a_1) V^\star_{\text{src},h}(s^\prime)-P_\text{tar}(s^\prime|s,a_1) V^\star_{\text{tar},h}(s^\prime)\right)\\
        &\leq \max_{a\in\mathcal{A}}\int_{s^\prime}\left(P_\text{src}(s^\prime|s,a)-P_\text{tar}(s^\prime|s,a)\right)r(s,a) +\max_{a\in\mathcal{A}}\left[\gamma \int_{s^\prime}\left(P_\text{src}(s^\prime|s,a)V^\star_{\text{src},h}(s^\prime)-P_\text{tar}(s^\prime|s,a)V^\star_{\text{tar},h}(s^\prime)\right)\right],
    \end{aligned}
\end{equation*}
where in the second equality, we have $a_1=\arg\max_{a\in\mathcal{A}}\int_{s^\prime}P_\text{src}(s^\prime|s,a)\left(r(s,a)+\gamma V^\star_{\text{src},h}(s^\prime)\right)$, and $a_2=\arg\max_{a\in\mathcal{A}}\int_{s^\prime}P_\text{tar}(s^\prime|s,a)\left(r(s,a)+\gamma V^\star_{\text{tar},h}(s^\prime)\right)$. In addition, we have
\begin{align*}
        &\max_{a\in\mathcal{A}}\int_{s^\prime}\left(P_\text{src}(s^\prime|s,a)-P_\text{tar}(s^\prime|s,a)\right)r(s,a)\leq \max_{a\in\mathcal{A}}\int_{s^\prime}\left|P_\text{src}(s^\prime|s,a)-P_\text{tar}(s^\prime|s,a)\right|\cdot r_{\max}, \\ 
        & \max_{a\in\mathcal{A}}\left[\gamma \int_{s^\prime}\left(P_\text{src}(s^\prime|s,a)V^\star_{\text{src},h}(s^\prime)-P_\text{tar}(s^\prime|s,a)V^\star_{\text{tar},h}(s^\prime)\right)\right] \\
        &\quad \leq \gamma\max_{a\in\mathcal{A}}\int_{s^\prime} P_\text{tar}(s^\prime|s,a)\left(V^\star_{\text{src},h}(s^\prime)-V^\star_{\text{tar},h}(s^\prime)\right)+\gamma\max_{a\in\mathcal{A}}\int_{s^\prime}\left|P_\text{src}(s^\prime|s,a)-P_\text{tar}(s^\prime|s,a)\right|V^\star_{\text{src},h}(s^\prime).
\end{align*}
Therefore, combining them together, we obtain
\begin{align}
\begin{aligned}\label{eq:recursion}
  &V^\star_{\text{src},h-1}(s)-V^\star_{\text{tar},h-1}(s)\\
  &\quad \leq   \frac{2r_{\max}}{1-\gamma}\sup_{s,a}\left[D_{\mathrm{TV}}(P_\text{tar}(\cdot|s,a),P_\text{src}(\cdot|s,a))\right] + \gamma\max_{s^\prime\in\mathcal{S}}\left[V^\star_{\text{src},h}(s^\prime)-V^\star_{\text{tar},h}(s^\prime)\right].
\end{aligned}
\end{align}
If we denote
\begin{align*}
    a_h \coloneqq \max_{s\in\mathcal{S}}\left[V^\star_{\text{src},h}(s)-V^\star_{\text{tar},h}(s)\right]
\end{align*}
and 
\begin{align*}
    c\coloneqq\frac{2r_{\max}}{1-\gamma}\sup_{s,a}\left[D_{\mathrm{TV}}(P_\text{tar}(\cdot|s,a),P_\text{src}(\cdot|s,a))\right],
\end{align*}
then Equation~\ref{eq:recursion} can be simplified as
\begin{equation}
\label{eq:proof3}
    \begin{aligned}
    a_{h-1}&\leq c+\gamma a_h\\
    \Rightarrow a_{h-1}-\frac{c}{1-\gamma}&\leq \gamma\cdot\left(a_h-\frac{c}{1-\gamma}\right).
    \end{aligned}
\end{equation}

Note that Equation~\ref{eq:proof3} is a recursive expression. Repeating the process recursively, we can easily get
\begin{equation}
\label{eq:proof4}
    a_0-\frac{c}{1-\gamma}\leq\gamma^H\left(a_H-\frac{c}{1-\gamma}\right),
\end{equation}
where $H$ denotes the maximum step of an episode\footnote{We focus on the infinite-horizon setting, and $H$ only serves as an intermediate variable for analysis.}. According to the definition of the state value, the value of the terminal state is $0$, thus $a_H=0$. Plugging $a_H=0$ into Equation~\ref{eq:proof4}, we can get
\begin{align*}
    V^\star_{\text{src},0}(s)-V^\star_{\text{tar},0}(s)\leq a_0\leq c\cdot\frac{1-\gamma^H}{1-\gamma}.
\end{align*}

If we set $H\rightarrow\infty$, we have
\begin{align*}
    V^\star_{\text{src},0}(s)-V^\star_{\text{tar},0}(s)\leq \frac{2r_{\max}}{(1-\gamma)^2}\sup_{s,a}\left[D_{\mathrm{TV}}(P_\text{tar}(\cdot|s,a),P_\text{src}(\cdot|s,a))\right].
\end{align*}

Due to the interchangeability of $\mathcal{M}_\text{src}$ and $\mathcal{M}_\text{tar}$, we also have
\begin{align*}
        V^\star_{\text{src},0}(s)-V^\star_{\text{tar},0}(s)\geq -\frac{2r_{\max}}{(1-\gamma)^2}\sup_{s,a}\left[D_{\mathrm{TV}}(P_\text{tar}(\cdot|s,a),P_\text{src}(\cdot|s,a))\right].
\end{align*}

Therefore, 
\begin{align*}
    \begin{aligned}
        (\text{III})&\coloneqq\left|V^\star_{\mathcal{M}_\text{src},h=0}(s)-V^\star_{\mathcal{M}_\text{tar},h=0}(s)\right|\\
        &\leq \frac{2r_{\max}}{(1-\gamma)^2}\sup_{s,a}\left[D_{\mathrm{TV}}(P_\text{tar}(\cdot|s,a),P_\text{src}(\cdot|s,a))\right].
    \end{aligned}
\end{align*}

Combining the bounds of terms (\text{II}) and (\text{III}), we get
\begin{align*}
    (\text{II})+(\text{III})\leq C_2\cdot\sup_{s,a}\left[D_{\mathrm{TV}}(P_\text{tar}(\cdot|s,a),P_\text{src}(\cdot|s,a))\right],
\end{align*}
where $C_2=\frac{(2\gamma+2)r_{\max}}{(1-\gamma)^2}$.

For the term (\text{IV}), its value is exactly $\epsilon_\text{src}^\star$ by the definition of $\epsilon_\text{src}^\star$. This concludes the proof.
\end{proof}

\textbf{Remark.} Similar to Lemma~\ref{lemma:1}, $C_2$ could be further reduced to $C=\frac{2\gamma}{1-\gamma}\cdot K_V$ under the Lipschitz continuity assumption.

\subsection{Proof of Proposition~\ref{proposition:2}}




\begin{proof}

We first divide the sub-optimality on the source domain into two terms:
\begin{align*}
    J_{\mathcal{M}_\text{src}}(\hat\pi)-J_{\mathcal{M}_\text{src}}(\pi^\star_\text{insrc})=\underbrace{J_{\mathcal{M}_\text{src}}(\mu)-J_{\mathcal{M}_\text{src}}(\pi^\star_{\text{insrc}})}_{\text{(i)}}+\underbrace{J_{\mathcal{M}_\text{src}}(\hat\pi)-J_{\mathcal{M}_\text{src}}(\mu)}_{\text{(ii)}}.
\end{align*}

We first focus on term (i). By using performance difference lemma~\citep{kakade2002approximately}, the return difference between $\pi$ and $\pi^\star_\text{insrc}$ in $\mathcal{M}_\text{src}$ gives:
\begin{equation}
\label{eq:proof6}
\begin{aligned}
    J_{\mathcal{M}_\text{src}}(\mu)-J_{\mathcal{M}_\text{src}}(\pi^\star_\text{insrc})&=\int_{s}d_{\mu}(s)\int_{a}\left[\mu(a|s)A_{\pi^\star_\text{insrc}}(s,a)\right]\\
    &=\mathbb{E}_{s\sim d_\mu(\cdot),a\sim\mu(\cdot|s)}\left[A_{\pi^\star_\text{insrc}}(s,a)\right].
\end{aligned}
\end{equation}

Thus we get the first term in the desired bound. Then we turn to derive the second term.

Based on the Corollary \textcolor{blue}{1} in~\citep{achiam2017constrained}, we have
\begin{align*}
    J_{\mathcal{M}_\text{src}}(\hat\pi)-J_{\mathcal{M}_\text{src}}(\mu)\geq\int_{s}d_\mu(s)\int_a\left[\hat\pi(a|s)A_\mu(s,a)\right]-\frac{2\gamma\epsilon_{\mu}^{\hat\pi}}{(1-\gamma)^2}D^{d_\mu}_{TV}(\hat\pi,\mu),
\end{align*}
where $\epsilon_\mu^{\hat\pi}=\max_{s}\left[\mathbb{E}_{a\sim\hat\pi}A_\mu(s,a)\right]$, and $D^{d_\mu}_{TV}(\hat\pi,\mu)=\frac{1}{2}\int_{s}d_\mu(s)\int_{a}\left|\hat\pi(a|s)-\mu(a|s)\right|$ is the total variance distance between $\hat\pi$ and $\mu$ over the distribution $d_\mu$.

Note that under the assumption that for all $(s,a)$, then $\left(\hat\pi(a|s)-\mu(a|s)\right)A_\mu(s,a)\geq 0$, we have
\begin{equation}
\label{eq:proof5}
    \begin{aligned}
        &\int_{s}d_\mu(s)\int_a\left[\hat\pi(a|s)A_\mu(s,a)\right]\\
        &=\int_{s}d_\mu(s)\int_a\left[\left(\hat\pi(a|s)-\mu(a|s)\right)A_\mu(s,a)\right] + \int_{s}d_\mu(s)\int_a\left[\mu(a|s)A_\mu(s,a)\right] \\
        &\geq0+0\\
        &=0.
    \end{aligned}
\end{equation}

Equation~\ref{eq:proof5} uses the fact that $\int_a\left[\mu(a|s)A_\mu(s,a)\right]=0$. An important fact is that $\pi$ is updated via offline RL (such as IQL), which imposes implicit or explicit policy constraints on $\pi$. We follow the constraints in IQL and assume
\begin{align*}
    \max_{s}(\text{KL}(\mu(\cdot|s),\hat\pi(\cdot|s)),\text{KL}(\hat\pi(\cdot|s),\mu(\cdot|s)))\leq\epsilon.
\end{align*}

Then we have
\begin{align*}
    \begin{aligned}
        J_{\mathcal{M}_\text{src}}(\hat\pi)-J_{\mathcal{M}_\text{src}}(\mu)&\geq 0-\frac{2\gamma\epsilon_{\mu}^{\hat\pi}}{(1-\gamma)^2}D^{d_\mu}_{TV}(\hat\pi,\mu)\\
        &\geq -\frac{2\gamma\epsilon_{\mu}^{\hat\pi}}{(1-\gamma)^2}\int_sd_\mu(s)\min\left(\sqrt{\text{KL}(\mu(\cdot|s),\hat\pi(\cdot|s))},\sqrt{\text{KL}(\hat\pi(\cdot|s),\mu(\cdot|s))} \right)\\
        &\geq -\frac{2\gamma\epsilon_{\mu}^{\hat\pi}}{(1-\gamma)^2}\int_sd_\mu(s)\max\left(\sqrt{\text{KL}(\mu(\cdot|s),\hat\pi(\cdot|s))},\sqrt{\text{KL}(\hat\pi(\cdot|s),\mu(\cdot|s))} \right)\\
        &\geq -\frac{2\gamma\epsilon_{\mu}^{\hat\pi}}{(1-\gamma)^2}\cdot\sqrt{\epsilon}.
    \end{aligned}
\end{align*}

The second inequality results from Pinsker's inequality~\citep{csiszar2011information}. Hence, we conclude that for policy $\hat\pi$ learned on the source domain via IQL, it induces a safe policy improvement:
\begin{equation}
\label{eq:proof7}
    J_{\mathcal{M}_\text{src}}(\hat\pi)-J_{\mathcal{M}_\text{src}}(\mu)\geq-\mathcal{O}\left(\frac{1}{(1-\gamma)^2}\right).
\end{equation}

By combining the result of Equation~\ref{eq:proof6} and Equation~\ref{eq:proof7}, we have
\begin{equation}
    J_{\mathcal{M}_\mathrm{src}}(\hat{\pi})-J_{\mathcal{M}_\mathrm{src}}(\pi^\star_\mathrm{insrc})\geq \mathbb{E}_{s\sim\rho_\mu(\cdot),a\sim\mu(\cdot|s)}\left[A_{\pi^\star_{\mathrm{insrc}}}(s,a)\right] - \mathcal{O}\left(\frac{1}{(1-\gamma)^2}\right).
\end{equation}
Since for a policy $\hat\pi$ learned using an in-sample offline RL algorithm (e.g., IQL), $J_{\mathcal{M}_\mathrm{src}}(\hat{\pi})-J_{\mathcal{M}_\mathrm{src}}(\pi^\star_\mathrm{insrc})\leq0$ holds. Therefore we have
\begin{equation}
    \left|J_{\mathcal{M}_\mathrm{src}}(\hat{\pi})-J_{\mathcal{M}_\mathrm{src}}(\pi^\star_\mathrm{insrc})\right|\leq -\mathbb{E}_{s\sim\rho_\mu(\cdot),a\sim\mu(\cdot|s)}\left[A_{\pi^\star_{\mathrm{insrc}}}(s,a)\right] + \mathcal{O}\left(\frac{1}{(1-\gamma)^2}\right).
\end{equation}
Then we conclude the proof.    
\end{proof}

\subsection{Proof of Proposition~\ref{proposition:3}}



\begin{proof}
We first decompose the objective into two terms:
\begin{align*}
    \begin{aligned}
        \mathbb{E}_{s\sim\rho_\mu(\cdot),a\sim\mu(\cdot|s)}\left[\hat{A}_\text{pre}(s,a)-A_{\pi^\star_\text{insrc}}(s,a)\right]&=\mathbb{E}_{s\sim\rho_\mu(\cdot),a\sim\mu(\cdot|s)}\left[A_\text{pre}(s,a)-A_{\pi^\star_\text{insrc}}(s,a)\right]\\
        &+\mathbb{E}_{s\sim\rho_\mu(\cdot),a\sim\mu(\cdot|s)}\left[\hat{A}_\text{pre}(s,a)-A_\text{pre}(s,a)\right].
    \end{aligned}
\end{align*}

For the first term, using the performance difference lemma~\citep{kakade2002approximately}, we have
    \begin{equation}
    \label{eq:proof8}
        J_{\mathcal{M}_\text{src}}(\mu)-J_{\mathcal{M}_\text{src}}(\pi_\text{pre})=\mathbb{E}_{s\sim\rho_{\mu}(\cdot),a\sim\mu(\cdot|s)}\left[A_{\pi_\text{pre}}(s,a)\right],
    \end{equation}
    \begin{equation}
    \label{eq:proof9}
        J_{\mathcal{M}_\text{src}}(\mu)-J_{\mathcal{M}_\text{src}}(\pi^\star_\text{insrc})=\mathbb{E}_{s\sim\rho_{\mu}(\cdot),a\sim\mu(\cdot|s)}\left[A_{\pi^\star_\text{insrc}}(s,a)\right].
    \end{equation}

By subtracting Equation~\ref{eq:proof8} with Equation~\ref{eq:proof9}, we can get $\Delta J_{\mathcal{M}_\mathrm{src}}(\pi^\star_\text{insrc},\pi_\text{pre})$. The second term is exactly $\mathbb{E}_{s\sim\rho_\mu(\cdot),a\sim\mu(\cdot|s)}\left[\Delta(s,a)\right]$. This concludes the proof.

\end{proof}

\section{Environment Setting}

In this section, we supplement the detailed environmental settings we adopt in our experiments, including the information of source and target domain datasets, and the code-level realization of kinematic shifts and morphology shifts, etc.

\subsection{Datasets and Metrics}

\begin{figure}[t]
    \centering
    \includegraphics[width=1.0\linewidth]{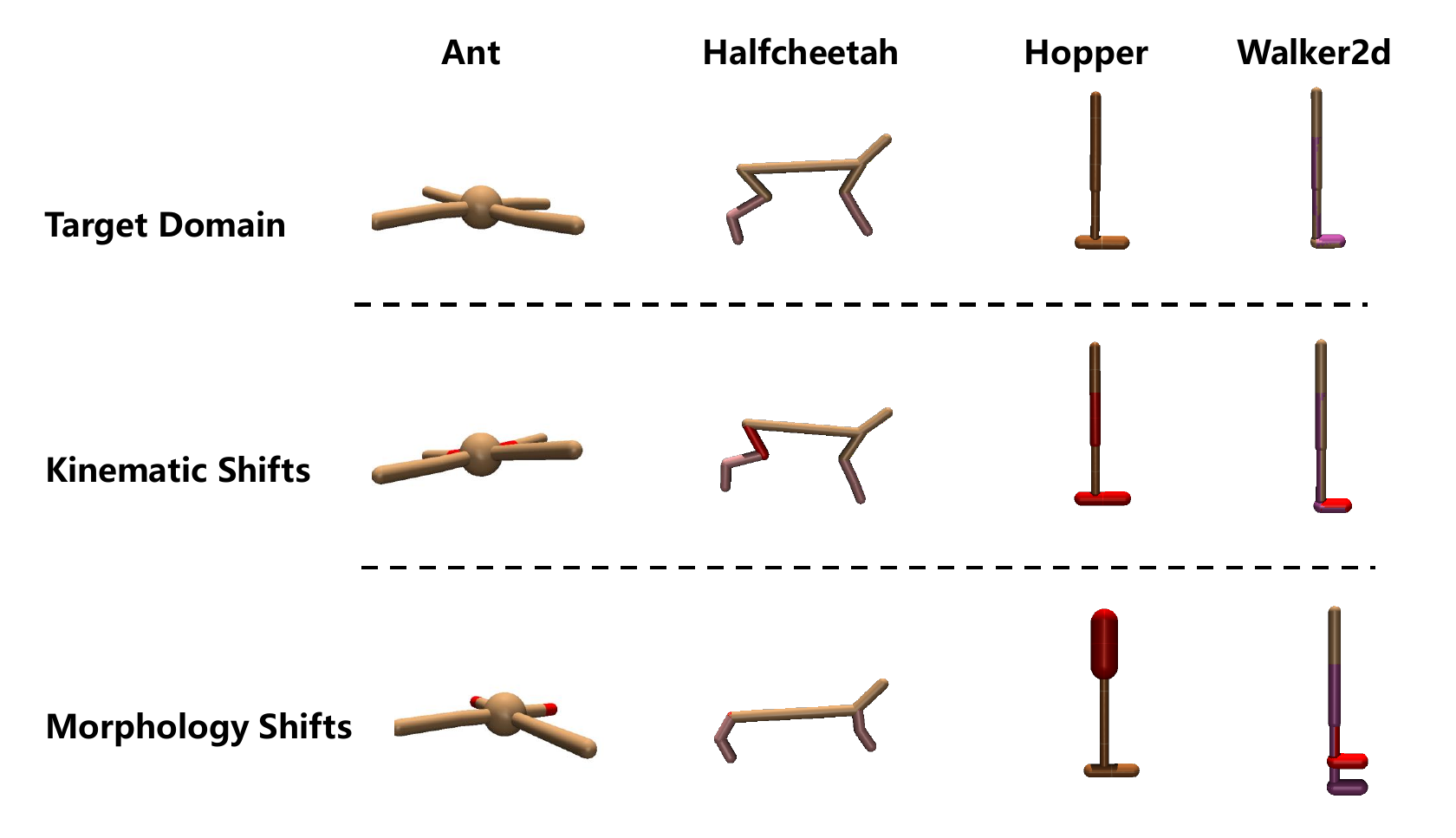}
    \caption{Visualization of the target domains and source domains with kinematic shifts and morphology shifts, across four tasks (\texttt{ant}, \texttt{halfcheetah}, \texttt{hopper}, \texttt{walker2d}).} 
    \vspace{-0.4cm}
    \label{fig:visualization}
\end{figure}

\textbf{Target domain datasets.} In the cross-domain offline RL setting, the target domain datasets should be collected in the target environment, and only limited target domain data is available. To this end, we directly sample a proportion of data from widely used D4RL~\citep{fu2020d4rl} MuJoCo datasets as the target domain datasets. In D4RL, the MuJoCo datasets are collected with an online SAC~\citep{haarnoja2018soft} agent in the environments of Gym~\citep{brockman2016openai} simulated by MuJoCo engine~\citep{todorov2012mujoco}. We adopt four tasks in our experiments: \texttt{halfcheetah-v2}, \texttt{hopper-v2}, \texttt{walker2d-v2}, \texttt{ant-v3}, and consider five dataset qualities for each task: \texttt{random}, \texttt{medium}, \texttt{medium-replay}, \texttt{medium-expert}, \texttt{expert}. In D4RL, the \textbf{random} datasets are collected with a randomly initialized policy. The \textbf{medium} datasets contain 1M samples collected from an early-stopped SAC policy. The \textbf{medium-replay} datasets record the replay buffer of an SAC agent trained up to the performance of the medium-level agent. The \textbf{medium-expert} datasets are a mixture of medium data and expert data at a 50-50 ratio. The \textbf{expert} datasets contain 1M samples logged from an expert policy. To construct target domain datasets with different sizes, we sample different numbers of data from the D4RL datasets. In Section~\ref{sec:main}, we sample $10\%$ data from each D4RL dataset, and in Section~\ref{appendix:extended}, we only sample 5,000 transitions for each dataset to simulate a very limited target data setting.

\textbf{Source domain datasets.} To fully examine the effectiveness of our method, we design two kinds of dynamics shift scenarios based on four widely used MuJoCo tasks: \texttt{halfcheetah-v2}, \texttt{hopper-v2}, \texttt{walker2d-v2}, \texttt{ant-v3}. The types of dynamics shift we implement include \texttt{kinematic shift} and \texttt{morphology shift}. The kinematic shift means that some joints of the simulated robot are broken and fail to rotate. The morphology shift indicates that the morphology of the simulated robot in the two domains is different. We show the visualization results of the simulated robot in the source domain and target domain in Figure~\ref{fig:visualization}. And the code-level modifications for realizing the dynamics shifts are deferred to the following subsections.

To construct source domain datasets, we follow a similar data-collecting procedure as in D4RL and collect the datasets in the revised environments. We train an online SAC agent within the environments with different kinds of dynamics shifts for 1M steps, and we log the checkpoints of the policy with different training steps and use them to roll out trajectories. The \textbf{random} datasets are generated by directly sampling the action space. The \textbf{medium} datasets are gathered with the logged policy that exhibits about $1/2$ performance of the expert policy. The \textbf{medium-replay} datasets consist of the replay buffer of the medium-level agent. We sample $50\%$ data from the medium datasets and $50\%$ data from the expert datasets, then we mix the sampled data and construct the \textbf{medium-expert} datasets. The \textbf{expert} datasets are gathered using the last policy checkpoint.

\textbf{Metrics.} The metric we use for evaluating the performance of the offline policy in the target domain is the \texttt{normalized score} (NS) in D4RL. It is computed as follows:
\begin{equation*}
    \text{NS}=\frac{J_{\pi}-J_{\text{random}}}{J_{\text{expert}}-J_{\text{random}}}\times 100\%,
\end{equation*}
where $J$ is the return acquired by the agent in the target domain, $J_{\text{random}}$ and $J_{\text{expert}}$ are the returns obtained by the random policy and the expert policy in the target domain, respectively.

\subsection{Kinematic Shift Tasks}
\label{appendix:realizationkinematic}
To simulate the kinematic shifts between the source domain and target domain, we modify the \texttt{xml} files of the original environments. Specifically, we change the rotation angle of some joints of the simulated robot for different tasks:

\textbf{\textit{halfcheetah-kinematic}:} The rotation angle of the joint on the thigh of the robot's back leg is modified from $[-0.52,1.05]$ to $[-0.0052,0.0105]$.

\begin{lstlisting}[language=Python]
# broken back thigh joint
<joint axis="0 1 0" damping="6" name="bthigh" pos="0 0 0" range="-.0052 .0105" stiffness="240" type="hinge"/>
\end{lstlisting}

\textbf{\textit{hopper-kinematic}:} The rotation angle of the head joint is modified from $[-150,0]$ to $[-0.15,0]$ and the rotation angle of the foot joint is modified from $[-45,45]$ to $[-18,18]$.

\begin{lstlisting}[language=Python]
# broken head joint
<joint axis="0 -1 0" name="thigh_joint" pos="0 0 1.05" range="-0.15 0" type="hinge"/>
# broken foot joint
<joint axis="0 -1 0" name="foot_joint" pos="0 0 0.1" range="-18 18" type="hinge"/>
\end{lstlisting}

\textbf{\textit{walker2d-kinematic}:} The rotation angle of the right foot joint is modified from $[-45,45]$ to $[-0.45,0.45]$.

\begin{lstlisting}[language=Python]
# broken right foot joint
<joint axis="0 -1 0" name="foot_joint" pos="0 0 0.1" range="-0.45 0.45" type="hinge"/>
\end{lstlisting}

\textbf{\textit{ant-kinematic}:} The rotation angles of the joints on the hip of two front legs are modified from $[-30,30]$ to $[-0.3,0.3]$.

\begin{lstlisting}[language=Python]
# broken hip joints of front legs
<joint axis="0 0 1" name="hip_1" pos="0.0 0.0 0.0" range="-0.3 0.3" type="hinge"/>
<joint axis="0 0 1" name="hip_2" pos="0.0 0.0 0.0" range="-0.3 0.3" type="hinge"/>
\end{lstlisting}

\subsection{Morphology Shift Tasks}
\label{appendix:realizationmorph}
Akin to the kinematic shifts, we modify the morphology of the simulated robot to simulate the morphology shifts:

\textbf{\textit{halfcheetah-morph}:} The sizes of the back thigh and the forward thigh are modified.

\begin{lstlisting}[language=Python]
# back thigh
<geom fromto="0 0 0 -0.0001 0 -0.0001" name="bthigh" size="0.046" type="capsule"/>
<body name="bshin" pos="-0.0001 0 -0.0001">
# front thigh
<geom fromto="0 0 0 0.0001 0 0.0001" name="fthigh" size="0.046" type="capsule"/>
<body name="fshin" pos="0.0001 0 0.0001">
\end{lstlisting}

\textbf{\textit{hopper-morph}:} The head size of the robot is modified.

\begin{lstlisting}[language=Python]
# head size
<geom friction="0.9" fromto="0 0 1.45 0 0 1.05" name="torso_geom" size="0.125" type="capsule"/>
\end{lstlisting}

\textbf{\textit{walker2d-morph}:} The thigh on the right leg of the robot is modified.

\begin{lstlisting}[language=Python]
# right leg
<body name="thigh" pos="0 0 1.05">
<joint axis="0 -1 0" name="thigh_joint" pos="0 0 1.05" range="-150 0" type="hinge"/>
<geom friction="0.9" fromto="0 0 1.05 0 0 1.045" name="thigh_geom" size="0.05" type="capsule"/>
<body name="leg" pos="0 0 0.35">
  <joint axis="0 -1 0" name="leg_joint" pos="0 0 1.045" range="-150 0" type="hinge"/>
  <geom friction="0.9" fromto="0 0 1.045 0 0 0.3" name="leg_geom" size="0.04" type="capsule"/>
  <body name="foot" pos="0.2 0 0">
    <joint axis="0 -1 0" name="foot_joint" pos="0 0 0.3" range="-45 45" type="hinge"/>
    <geom friction="0.9" fromto="-0.0 0 0.3 0.2 0 0.3" name="foot_geom" size="0.06" type="capsule"/>
  </body>
</body>
</body>
\end{lstlisting}

\textbf{\textit{ant-morph}:} The size of the robot's two front legs is reduced.

\begin{lstlisting}[language=Python]
# front leg 1
<geom fromto="0.0 0.0 0.0 0.1 0.1 0.0" name="left_ankle_geom" size="0.08" type="capsule"/>
# front leg 2
<geom fromto="0.0 0.0 0.0 -0.1 0.1 0.0" name="right_ankle_geom" size="0.08" type="capsule"/>
\end{lstlisting}

\section{Implementation Details}
\label{appendix:implementation}
In this section, we provide more details about the implementation of the baseline methods and our method. We also list the hyperparameter setup for all methods.

\subsection{Baselines}
\label{appendix:baseline}
\textbf{IQL:} IQL~\citep{kostrikov2021offline} is an off-the-shelf offline RL algorithm that learns the policy in an \textit{in-sample manner}, which means no OOD samples that lie outside of the offline datasets are required during training. In the cross-domain offline setting, we follow the algorithm procedure but draw samples from both the source domain dataset and the target domain dataset. IQL trains the state value function via expectile regression:
\begin{align*}
    \mathcal{L}_{V}=\mathbb{E}_{(s,a)\sim \mathcal{D}_{\text{src}}\cup\mathcal{D}_{\text{tar}}}\left[L_2^\tau(Q_{\theta^\prime}(s,a)-V_{\psi}(s))\right]
\end{align*}
where $L_2^\tau(u)=\left|\tau-\mathbb{I}(u<0)\right|u^2$, $\mathbb{I}(\cdot)$ is the indicator function, and $\theta^\prime$ is the target network parameter. With such expectile regression, an in-sample optimal value function can be learned. Then the state-action value function is updated by:
\begin{align*}
    \mathcal{L}_{Q}=\mathbb{E}_{(s,a,r,s^\prime)\sim\mathcal{D}_{\text{src}}\cup\mathcal{D}_{\text{tar}}}\left[(r(s,a)+\gamma V_\psi(s^\prime)-Q_\theta(s,a))^2\right]
\end{align*}
Then the advantage value is calculated as $A(s,a)=Q(s,a)-V(s,a)$ and the policy is extracted by advantage weighted behavior cloning:
\begin{align*}
    \mathcal{L}_{\pi}=-\mathbb{E}_{(s,a)\sim \mathcal{D}_{\text{src}}\cup\mathcal{D}_{\text{tar}}}\left[\exp(\beta\times A(s,a))\log\pi_{\phi}(a|s)\right],
\end{align*}
where $\beta$ is the inverse temperature coefficient. We implement IQL by following its offlicial codebase\footnote{\textcolor{purple}{https://github.com/ikostrikov/implicit\_q\_learning.git}}.

\textbf{BOSA:} BOSA~\citep{liu2024beyond} defines the issues of the state-action OOD problem and the dynamics OOD problem in cross-domain offline RL, and proposes two support constraints to tackle the issues. To be specific, BOSA handles the OOD state-action problem by supported policy optimization, and mitigates the OOD dynamics problem by supported value optimization. BOSA updates the critic by supported value optimization:
\begin{align*}
    \mathcal{L}_Q=\mathbb{E}_{(s,a)\sim\mathcal{D}_{\text{src}}}\left[Q_{\theta_i}(s,a)\right] + \mathbb{E}_{\substack{(s,a,r,s^\prime)\sim\mathcal{D}_{\text{src}}\cup\mathcal{D}_\text{tar},\\ a^\prime\sim\pi_\phi(s^\prime)}}\left[\mathbb{I}(\hat{P}_\text{tar}(s'|s,a) > \epsilon)(Q_{\theta_i}(s,a) - y)^2\right],
\end{align*}
where $\mathbb{I}(\cdot)$ is the indicator function, and $\hat{P}_{\text{tar}}(s^\prime|s,a)$ is the target domain transition dynamics estimated via maximum likelihood estimation, and $\epsilon$ is the threshold coefficient. The policy in BOSA is updated by supported policy optimization:
\begin{align*}
    \mathcal{L}_\pi=\mathbb{E}_{s \sim \mathcal{D}_{\text{src}} \cup \mathcal{D}_{\text{tar}}, \ a \sim \pi_{\phi}(s)} \left[Q_{\theta_i}(s, a)\right], \quad \text{s.t. } \mathbb{E}_{s \sim \mathcal{D}_{\text{src}} \cup \mathcal{D}_{\text{tar}}} \left[\hat{\pi}_{\text{mix}}(\pi_{\phi}(s) \mid s)\right] > \epsilon',
\end{align*}
where $\epsilon^\prime$ is the threshold coefficient, and $\hat{\pi}_{\phi_{\text{mix}}}(\cdot|s)$ is the empirical behavior policy of the mixed datasets $\mathcal{D}_{\text{src}}\cup\mathcal{D}_{\text{tar}}$ learned with CVAE~\citep{kingma2013auto}. We do not find the official implementation for BOSA, so we use the codebase\footnote{\textcolor{purple}{https://github.com/OffDynamicsRL/off-dynamics-rl.git}} in ODRL benchmark~\citep{lyu2024odrlabenchmark}, which provides high-quality implementations for various off-dynamics RL algorithms.

\textbf{DARA.} DARA~\citep{liu2022dara} leverages dynamics-aware reward modification to fulfill dynamics adaptation and is the offline version of DARC~\citep{eysenbach2020off}. DARA trains two domain classifiers $q_{\theta_{SAS}}(\text{target}|s_t,a_t,s_{t+1})$ and $q_{\theta_{SA}}(\text{target}|s_t,a_t)$ as follows.
\begin{align*}
    \begin{aligned}
    \mathcal{L}_{\theta_{SAS}}&=\mathbb{E}_{\mathcal{D}_{\text{tar}}}\left[\log q_{\theta_{SAS}}(\text{target}|s_t, a_t, s_{t+1})\right]+\mathbb{E}_{\mathcal{D}_{\text{src}}}\left[\log(1- q_{\theta_{SAS}}(\text{target}|s_t, a_t, s_{t+1}))\right], \\
    \mathcal{L_{\theta_{SA}}}&=\mathbb{E}_{\mathcal{D}_{\text{tar}}}\left[\log q_{\theta_{SA}}(\text{target}|s_t, a_t)\right]+\mathbb{E}_{\mathcal{D}_{\text{src}}}\left[\log(1- q_{\theta_{SA}}(\text{target}|s_t, a_t))\right],
    \end{aligned}
\end{align*}
The two domain classifiers are used to estimate the dynamics gap $\log\frac{P_{\mathcal{M}_{\text{tar}}}(s_{t+1}|s_t,a_t)}{P_{\mathcal{M}_{\text{src}}}(s_{t+1}|s_t,a_t)}$ between the source domain and the target domain. Then the estimated dynamics gap is used as a penalty to the source domain rewards:
\begin{align*}
    \hat{r}_{\text{DARA}} = r - \lambda \times \delta_r, \quad \delta_r(s_t, a_t) = - \log \frac{q_{\theta_{\text{SAS}}}(\text{target} | s_t, a_t, s_{t+1}) q_{\theta_{\text{SA}}}(\text{source} | s_t, a_t)}{q_{\theta_{\text{SAS}}}(\text{source} | s_t, a_t, s_{t+1}) q_{\theta_{\text{SA}}}(\text{target} | s_t, a_t)},
\end{align*}
where $\lambda$ controls the intensity of the reward penalty. We use the re-implementation in ODRL for DARA. $\lambda$ is set to $0.1$, and the reward penalty is clipped within $[-10,10]$ for training stability.

\textbf{IGDF.} IGDF~\citep{wen2024contrastive} estimates the domain gap between the source domain and the target domain with contrastive representation learning, and employs data filtering to share source domain samples with a smaller dynamics gap for training. IGDF trains a score function $h(\cdot)$ using $(s,a,s^\prime_{\text{tar}})\sim\mathcal{D}_{\text{tar}}$ as the positive samples, and transitions $(s,a,s^\prime_{\text{src}})$ as the negative samples, where $(s,a)\sim\mathcal{D}_{\text{tar}}$ and $s^\prime_{\text{src}}\sim\mathcal{D}_{\text{src}}$. $h(\cdot)$ is trained with the contrastive learning objective:
\begin{equation}
\label{eq:contrastive}
    \mathcal{L}=-\mathbb{E}_{(s,a,s^\prime_{\text{tar}})}\mathbb{E}_{s^\prime_{\text{src}}}\left[\log\frac{h(s,a,s^\prime_{\text{tar}})}{\sum_{s^\prime\in s^\prime_{\text{tar}}\cup s^\prime_{\text{src}}}h(s,a,s^\prime)}\right].
\end{equation}
For the construction of the score function, IGDF adopts two networks $\phi(s,a)$ and $\psi(s^\prime)$ to learn the representations of state-action and state, respectively. The score function is expressed as a linear parameterization of $\phi(s,a)$ and $\psi(s^\prime)$:
\begin{align*}
    h(s,a,s^\prime)=\exp(\phi(s,a)^T\psi(s^\prime)).
\end{align*}
Based on the learned score function, IGDF proposes to selectively share source domain data for training value functions:
\begin{align*}
    \mathcal{L}_Q = \frac{1}{2} \mathbb{E}_{\mathcal{D}_{\text{tar}}} \left[ (Q_{\theta} - \mathcal{T}Q_{\theta})^2 \right] + \frac{1}{2} \alpha \cdot h(s, a, s') \mathbb{E}_{(s, a, s') \sim \mathcal{D}_{\text{src}}} \left[ \mathbb{I}(h(s, a, s') > h_{\xi\%})(Q_{\theta} - \mathcal{T}Q_{\theta})^2 \right],
\end{align*}
where $\mathbb{I}(\cdot)$ is the indicator function, $\alpha$ is the weighting coefficient, $\xi$ is the data selection ratio. We implement IGDF by following its official codebase\footnote{\textcolor{purple}{https://github.com/BattleWen/IGDF.git}}.

\textbf{OTDF.} OTDF~\citep{lyu2025cross} depicts the distance between the source domain data and target domain data by computing the Wasserstein distance~\citep{peyre2019computational}:
\begin{equation}
\label{eq:ot}
\mathcal{W}(u, u') = \min_{\mu \in M} \sum_{t=1}^{|\mathcal{D}_{\text{src}}|} \sum_{t'=1}^{|\mathcal{D}_{\text{tar}}|} C(u_t, u'_{t'}) \cdot \mu_{t, t'},
\end{equation}
where $u=s_{\text{src}}\oplus a_{\text{src}} \oplus s'_{\text{src}}$, $u^\prime=s_{\text{tar}}\oplus a_{\text{tar}} \oplus s'_{\text{tar}}$, $C$ is the cost function and $M$ is the coupling matrices. After solving Equation~\ref{eq:ot} for $\mu^\star$, the OTDF determines the deviation between a source domain dataset and the target domain dataset via:
\begin{align*}
    d(u_t)=-\sum_{t^\prime=1}^{\left|\mathcal{D}_{\text{tar}}\right|}C(u_t,u_{t^\prime})\mu_{t,t^\prime}^\star,\quad u_t=(s_{\text{src}}^t,a^t_{\text{src}},(s^\prime_{\text{src}})^t)\sim\mathcal{D}_{\text{src}}.
\end{align*}
Then the critic is updated by
\begin{align*}
    \mathcal{L}_Q = \mathbb{E}_{\mathcal{D}_{\text{tar}}} \left[ (Q_\theta - \mathcal{T}Q_\theta)^2 \right] + \mathbb{E}_{(s, a, s') \sim \mathcal{D}_{\text{src}}} \left[ \exp(\alpha \times d) \mathbb{I}(d > d_{\%}) (Q_\theta - \mathcal{T}Q_\theta)^2 \right].
\end{align*}
Besides, OTDF includes an extra policy regularization term that encourages the policy to be close to the support region of the target dataset:
\begin{align*}
    \widehat{\mathcal{L}_\pi}=\mathcal{L}_\pi-\beta\times\mathbb{E}_{s\sim\mathcal{D}_{\text{src}}\cup\mathcal{D}_{\text{tar}}}\log\pi_{\text{tar}}^b(\pi(\cdot|s)|s),
\end{align*}
where $\mathcal{L}_\pi$ is the original policy optimization objective and $\beta$ is the weight coefficient. We run the official code\footnote{\textcolor{purple}{https://github.com/dmksjfl/OTDF.git}} for OTDF in our experiments.

\subsection{More Details of Motivation Example}
\label{appendix:motivation}

In this section, we supplement with more details for our motivation example in Section~\ref{sec:motivation}. 

\begin{figure}[t]
    \centering
    \includegraphics[width=0.98\linewidth]{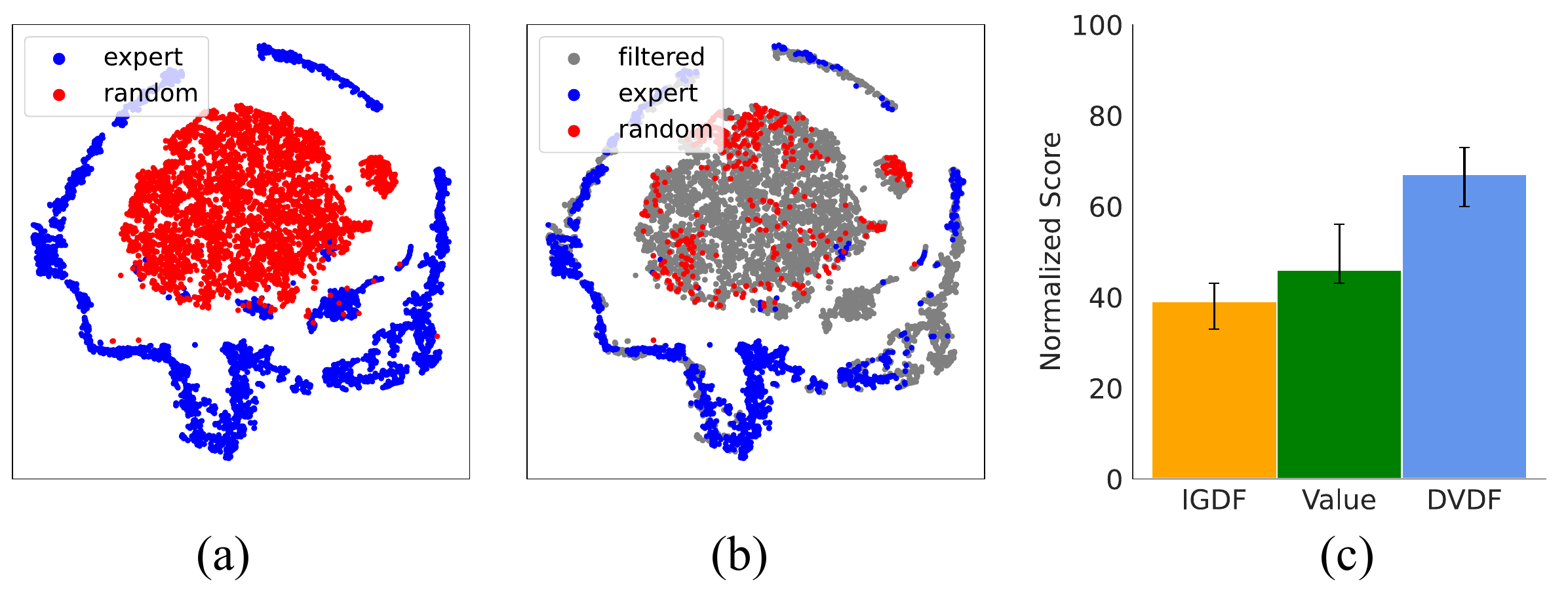}
    \caption{\textbf{(a)} Visualization of source domain data. \textbf{(b)} Source domain data filtering visualization of Value-IGDF. \textbf{(c)} Performance comparison between IGDF, Value-IGDF and DVDF.} 
    \vspace{-0.4cm}
    \label{fig:motivation_appendix}
\end{figure}

We provide the visualization results of the random-expert mixed source domain dataset in Figure~\ref{fig:motivation_appendix} (a). We further conduct an experiment to demonstrate the necessity of dynamics- and value-aligned data filtering. Instead of using $h(s,a,s^\prime)$ or $g(s,a,s^\prime)$ as the indicator like IGDF and DVDF, we directly use $A_\mathrm{pre}(s,a)$ for data filtering, which means we select source domain data with a smaller value misalignment, disregarding dynamics misalignment. We term this modified algorithm version as Value-IGDF. We visualize the data filtering results (data selection ratio $\xi$ is $25\%$) in Figure~\ref{fig:motivation_appendix} (b), which indicates that the selected samples are predominantly expert samples, despite the dynamics shifts. We evaluate the performance of IGDF, Value-IGDF, and DVDF on the source domain and present the results in Figure~\ref{fig:motivation_appendix} (c). We can see that while Value-IGDF outperforms IGDF, it still lags behind DVDF, highlighting the necessity of jointly considering dynamics and value alignment.

\subsection{Algorithmic Details of DVDF}
\label{appendix:pseudocode}

As a plug-in module, DVDF can be seamlessly integrated into various cross-domain offline RL algorithms, such as IGDF and OTDF. In this section, we present more details about the combination of DVDF with IGDF (tagged DVDF-IGDF) and OTDF (namely DVDF-OTDF), and summarize the pseudocodes of DVDF-IGDF and DVDF-OTDF.

For DVDF-IGDF, the new score function $g(\cdot)$ incorporates the vanilla score function $h(\cdot)$ and the pre-trained advantage function $A_{\text{pre}}(\cdot)$. During training, only $h(\cdot)$ is updated and $A_{\text{pre}}(\cdot)$ remains frozen. We leverage $g(\cdot)$ for data filtering, and the other procedures of DVDF-IGDF remain identical to those of IGDF. The detailed pseudocode of DVDF-IGDF is presented in Algorithm~\ref{alg:dv_igdf}. The \textcolor{blue}{blue} texts mark the different algorithm procedure from the original IGDF.

\begin{algorithm}[htbp]
\caption{DVDF-IGDF}
\label{alg:dv_igdf}
\begin{algorithmic}[1]
\STATE \textbf{Require:} Source domain offline dataset $\mathcal{D}_{\text{src}}$, target domain offline dataset $\mathcal{D}_{\text{tar}}$, mixed offline dataset $\mathcal{D}_{\text{mix}}$
\STATE \textbf{Initialization:} Policy network $\pi_\eta$, value network $V_\beta$, target $Q$ network $Q_\theta$, encoder networks $\phi(s, a)$, $\psi(s')$, data selection ratio $\xi$, batch size $B$, importance coefficient $\alpha$, alignment tradeoff coefficient $\lambda$
\STATE \textbf{// Pre-train the advantage function}
\STATE \textcolor{blue}{Pre-train an SQL agent on $\mathcal{D}_\text{src}$, obtain $\hat{A}_\text{pre}(\cdot)$ and normalize $\hat{A}_\text{pre}(\cdot)$}
\STATE \textbf{// Contrastive Representation Learning}
\STATE Train $h(s,a,s^\prime)$ via contrastive representation learning by Equation~\ref{eq:contrastive}, \textcolor{blue}{obtain the score function $\omega(s,a,s^\prime)=\lambda\cdot h(s,a,s^\prime)+(1-\lambda)\cdot \hat{A}_\text{pre}(s,a)$}
\STATE \textbf{// TD Learning}
\FOR{each gradient step}
    \STATE Sample $b_{\text{src}} := \{(s, a, r, s')\}$ from $\mathcal{D}_{\text{src}}$
    \STATE Sample $b_{\text{tar}} := \{(s, a, r, s')\}$ from $\mathcal{D}_{\text{tar}}$
    \STATE \textcolor{blue}{Sample the top-$\xi$ samples from $b_{\text{src}}$ ranked by $g(s_{\text{src}}, a_{\text{src}},s^\prime_\text{src})$}
    \STATE Compute weights $\omega(s, a, s')$ following:
    \STATE \quad \textcolor{blue}{$\omega(s, a, s') = \mathbb{I}(g(s,a,s^\prime) \geq g_{\xi\%})$}
    \STATE \textbf{// Optimize the $V_\beta$ function}
    \STATE Compute loss $\mathcal{L}_V$:
    \STATE \quad $\mathcal{L}_V = \mathbb{E}_{(s, a) \sim \mathcal{D}_{\text{src}} \cup \mathcal{D}_{\text{tar}}} \left[ L_2^\tau\left( Q_\theta(s, a) - V_\beta(s) \right) \right]$
    \STATE Update $V_\beta$ using $\mathcal{L}_V$
    \STATE \textbf{// Optimize the $Q_\theta$ function}
    \STATE Compute loss $\mathcal{L}_Q$:
    \STATE \quad \textcolor{blue}{$\mathcal{L}_Q = \frac{1}{2}\cdot\mathbb{E}_{(s, a, r, s') \sim \mathcal{D}_{\text{tar}}} \left[ \left( Q_\theta(s, a) - (r + \gamma V_\beta(s')) \right)^2 \right]$}
    \STATE \quad \textcolor{blue}{$+ \frac{1}{2}\cdot\mathbb{E}_{(s, a, r, s') \sim \mathcal{D}_{\text{src}}} \left[ \omega(s, a, s')g(s,a,s^\prime) \left( Q_\theta(s, a) - (r + \gamma V_\beta(s')) \right)^2 \right]$}
    \STATE Update $Q_\theta$ using $\mathcal{L}_Q$
    \STATE \textbf{// Update target network}
    \STATE Update target network parameters: $\theta' \leftarrow (1 - \mu) \theta + \mu \theta'$
    \STATE \textbf{// Policy Extraction (AWR)}
    \STATE Compute advantage $A(s, a) = Q_\theta(s, a) - V_\beta(s)$
    \STATE Optimize policy network $\pi_\eta$ using advantage-weighted regression (AWR):
    \STATE \quad $\mathcal{L}_\pi = \mathbb{E}_{(s, a) \sim \mathcal{D}_{\text{src}} \cup \mathcal{D}_{\text{tar}}} \left[ \exp(\alpha A(s, a)) \log \pi_\eta(a|s) \right]$
\ENDFOR
\end{algorithmic}
\end{algorithm}

For DVDF-OTDF, other than $A_{\text{pre}}(\cdot)$, the optimal coupling $\mu^\star$ also needs to be computed before the policy training process begins. We choose cosine distance as the cost function and utilize OTT-JAX library~\citep{cuturi2022optimal} for solving the OT problem. Note that DVDF only plays a part in the procedure of data filtering, and the remaining process of training remains the same as that in OTDF. We summarize the pseudocode of DVDF-OTDF in Algorithm~\ref{alg:dv_otdf}. The \textcolor{blue}{blue} lines highlight the different procedure from the vanilla OTDF.

\begin{algorithm}[htbp]
\caption{DVDF-OTDF}
\label{alg:dv_otdf}
\begin{algorithmic}[1]
\STATE \textbf{Input:} Source domain dataset $\mathcal{D}_{\text{src}}$, target domain dataset $\mathcal{D}_{\text{tar}}$, batch size $N$, data selection ratio $\xi$, alignment tradeoff coefficient $\lambda$
\STATE Initialize policy network $\pi_\phi$, value networks $V_\psi$, $Q_\theta$, target $Q$ function $Q_{\theta'}$, the cost function $C$, policy coefficients $\beta$, number of sampled latent variables $M$, target update rate $\eta$
\STATE \textbf{// Pre-train the advantage function}
\STATE \textcolor{blue}{Pre-train an SQL agent on $\mathcal{D}_\text{src}$, obtain $\hat{A}_\text{pre}(\cdot)$ and normalize $\hat{A}_\text{pre}(\cdot)$}
\STATE \textbf{// Solve the OT problem}
\STATE Compute the optimal alignment between $\mathcal{D}_{\text{src}}$ and $\mathcal{D}_{\text{tar}}$ with Equation 4
\STATE Compute deviations $\{d_v\}_{v=1}^{|\mathcal{D}_{\text{src}}|}$ between the source domain data and $\mathcal{D}_{\text{tar}}$ with Equation 5
\STATE Normalize the deviations $d_v$ to obtain normalized deviations $d^\prime_v$
\STATE \textcolor{blue}{Compute the score function $\omega(s,a,s^\prime)=\lambda\cdot d^\prime_v(s,a,s^\prime) + (1-\lambda)\cdot \hat{A}_\text{pre}(s,a,s^\prime)$}
\STATE Concatenate $\mathcal{D}_{\text{src}}$ and $\{\omega_v\}_{v=1}^{|\mathcal{D}_{\text{src}}|}$ to get $\mathcal{D}_{\text{src}}' = \{(s_v, a_v, r_v, s_v', \omega_v)\}_{v=1}^{|\mathcal{D}_{\text{src}}|}$
\FOR{$i = 1, 2, \dots$}
    \STATE Sample a mini-batch $b_{\text{src}} := \{(s, a, r, s', w)\}_{v=1}^{N/2}$ from $\mathcal{D}_{\text{src}}'$
    \STATE Sample a mini-batch $b_{\text{tar}} := \{(s, a, r, s')\}_{v=1}^{N/2}$ from $\mathcal{D}_{\text{tar}}$
    \STATE Update the state value function $V_\psi$ via:
    \STATE \quad $\mathcal{L}_V = \mathbb{E}_{(s,a) \sim \mathcal{D}_{\text{src}} \cup \mathcal{D}_{\text{tar}}} \left[ L_2^\tau\left( Q_\theta(s, a) - V_\psi(s) \right) \right]$
    \STATE \textbf{// Data filtering}
    \STATE Rank the deviations of the sample source domain data and reject the lowest $\xi\%$ of them
    \STATE Compute the weights for the remaining source domain data by $\exp(\beta \omega_v')$
    \STATE Compute the target value via: $y = r + \gamma V_\psi(s')$
    \STATE Optimize the state-action value function $Q_\theta$ on $b_{\text{src}} \cup b_{\text{tar}}$ via:
    \STATE \quad \textcolor{blue}{$\mathcal{L}_Q = \mathbb{E}_{(s,a,r,s') \sim \mathcal{D}_{\text{tar}}} \left[ \left( Q_\theta(s, a) - y \right)^2 \right] + \mathbb{E}_{(s,a,d) \sim \mathcal{D}_{\text{src}}'} \left[ \exp(\beta \omega_v') \left( Q_\theta(s, a) - y \right)^2 \right]$}
    \STATE Update the target network via: $\theta' \leftarrow \eta \theta + (1 - \eta) \theta'$
    \STATE \textbf{// Dataset regularization}
    \STATE Compute the advantage $A$ and optimize the policy $\pi_\phi$ on $b_{\text{src}} \cup b_{\text{tar}}$ using advantage-weighted regression (AWR) and dataset regulation:
    \STATE \quad $\mathcal{L}_\pi = \mathbb{E}_{(s,a) \sim \mathcal{D}_{\text{src}} \cup \mathcal{D}_{\text{tar}}} \left[ \exp(\beta A) \log \pi_\phi(a|s) - \beta \cdot \text{KL}\left( \pi_\phi(\cdot|s) \| \pi_{\text{prior}}(\cdot|s) \right) \right]$
\ENDFOR
\end{algorithmic}
\end{algorithm}


\subsection{Hyperparameter Setup}

We present the main hyperparameter setup in our experiments for all the methods we use in Table~\ref{tab:hyperparameters}.

\begin{table}[htbp]
\centering
\caption{Hyperparameter setup for DVDF and baseline methods}
\label{tab:hyperparameters}
\begin{tabular}{ll}
\toprule
\textbf{Hyperparameter} & \textbf{Value} \\
\midrule
\multicolumn{2}{l}{\textbf{Shared}} \\
Actor network & (256, 256) \\
Critic network & (256, 256) \\
Learning rate & $3 \times 10^{-4}$ \\
Optimizer & Adam~\citep{kingma2014adam} \\
Discount factor & 0.99 \\
Nonlinearity & ReLU \\
Target update rate & $5 \times 10^{-3}$ \\
Source domain Batch size & 128 \\
Target domain Batch size & 128 \\
\midrule
\multicolumn{2}{l}{\textbf{IQL}} \\
Temperature coefficient & 0.2 \\
Maximum log std & 2 \\
Minimum log std & -20 \\
Inverse temperature parameter $\beta$ & 3.0 \\
Expectile parameter $\tau$ & 0.7 \\
\midrule
\multicolumn{2}{l}{\textbf{DARA}} \\
Temperature coefficient & 0.2 \\
Classifier network & (256, 256) \\
Reward penalty coefficient $\lambda$ & 0.1 \\
\midrule
\multicolumn{2}{l}{\textbf{BOSA}} \\
Temperature coefficient & 0.2 \\
Maximum log std & 2 \\
Minimum log std & -20 \\
Policy regularization coefficient $\lambda_{\text{policy}}$ & 0.1 \\
Transition coefficient $\lambda_{\text{transition}}$ & 0.1 \\
Threshold parameter $\rho$ & $\log(0.01)$ \\
Value weight $\sigma$ & 0.1 \\
CVAE ensemble size of the dynamics model & 5 \\
\midrule
\multicolumn{2}{l}{\textbf{IGDF}} \\
Representation dimension & \{16, 64\} \\
Contrastive encoder network & (256, 256) \\
Encoder pretrained steps & 7000 \\
Importance coefficient & 1.0 \\
Data selection ratio $\xi$ & 25\% \\
\midrule
\multicolumn{2}{l}{\textbf{OTDF}} \\
CVAE training steps & 10000 \\
Number of sampled latent variables $M$ & 10 \\
Cost function & cosine \\
Data filtering ratio $\xi$ & $20\%$  \\
\midrule
\multicolumn{2}{l}{\textbf{DVDF-IGDF}} \\
SQL pre-training steps & $1\times10^6$ \\
Trade-off coefficient $\lambda$ & 0.7\\
Data selection ratio $\xi$ & $50\%$\\
\midrule
\multicolumn{2}{l}{\textbf{DVDF-OTDF}} \\
SQL pre-training steps & $1\times10^6$ \\
Trade-off coefficient $\lambda$ & 0.7 \\
Data filtering ratio $\xi$ & $50\%$
 \\
\bottomrule
\end{tabular}
\end{table}

\section{Wider Experimental Results}

\subsection{Results under Morphology Shifts}
\label{appendix:morphresults}
In Section~\ref{sec:main}, we give the evaluation results of our methods with different sizes of target domain datasets under kinematic shifts. In this section, we supplement with more evaluation results under morphology shifts, with other experimental settings identical to Section~\ref{sec:main}. 

Table~\ref{tab:resultsappendix} presents the comparison results using $10\%$ data of the D4RL datasets as the target domain data. The results clearly show that DVDF enhances the performance of the base algorithms. Specifically, DVDF-IGDF achieves the highest total score among all the methods, outperforming IGDF by $\textbf{15.4\%}$ (1039.0 to 1198.7) and performing better in \textbf{16} out of 20 tasks, while DVDF-OTDF improves OTDF by $\textbf{11.0\%}$ (1042.1 to 1156.3) and excels in \textbf{14} out of 20 tasks.

\begin{table}[t]
    \centering
    \begingroup
    \setlength{\tabcolsep}{11pt}
    \caption{\textbf{Performance comparison under morphology shifts.} half=halfcheetah, hopp=hopper, walk=walker2d, r=random, m=medium, me=medium-expert, mr=medium-replay, e=expert. We report the normalized score evaluated in the target domain, and $\pm$ captures the standard deviation across 5 seeds. We \textbf{bold} the highest scores for each task.}
    \label{tab:resultsappendix}
    \begin{tabular}{l|ccc|cc|cc}
    \toprule
    \textbf{Dataset} & IQL & BOSA & DARA & IGDF & DVDF-IGDF & OTDF & DVDF-OTDF\\
    \midrule
    half-r & \textbf{6.7} & 2.2 & 2.9 & \textbf{4.9$\pm$0.3} & 4.8$\pm$0.1 & \textbf{2.2$\pm$0.2} & 2.0$\pm$0.1 \\
    half-m & 45.8 & 41.3 & 45.6 & 45.5$\pm$0.1 & \textbf{46.0$\pm$0.3} & \textbf{44.3$\pm$0.2} & 42.5$\pm$0.2 \\
    half-mr & 26.1 & 27.8 & 28.9 & 24.2$\pm$3.3 & \textbf{31.1$\pm$4.7} & 19.7$\pm$2.5 & \textbf{27.2$\pm$1.3} \\
    half-me & 63.0 & 44.4 & 59.2 & 50.2$\pm$ 3.4 & \textbf{61.9$\pm$4.9} & 42.9$\pm$3.6 & \textbf{53.8$\pm$4.9} \\
    half-e & 65.2 & 78.6 & 55.4 & 43.0$\pm$6.2 & \textbf{51.7$\pm$6.8} & 74.2$\pm$5.0 & \textbf{91.7$\pm$7.0} \\
    hopp-r & 4.7 & 1.4 & 4.8 & \textbf{4.8$\pm$0.2} & 4.7$\pm$0.1 & \textbf{2.4$\pm$0.1} & 1.4$\pm$0.1 \\
    hopp-m & 56.4 & 28.7 & 49.5 & \textbf{55.5$\pm$2.9} & 52.7$\pm$4.6 & 49.1$\pm$2.2 & \textbf{59.4$\pm$3.7} \\
    hopp-mr & 51.3 & 40.6 & 53.5 & 54.9$\pm$5.8 & \textbf{58.6$\pm$6.4} & 24.9$\pm$3.4 &\textbf{32.6$\pm$4.5} \\
    hopp-me & 35.8 & 20.2 & 38.2 & 43.3$\pm$3.6 & \textbf{61.2$\pm$4.2} & 51.8$\pm$3.9 & \textbf{63.4$\pm$5.3} \\
    hopp-e & 87.2 & 64.3 & 77.1 & 51.5$\pm$2.9 & \textbf{86.9$\pm$4.2} & \textbf{113.2$\pm$5.9} & 109.5$\pm$2.1 \\
    walk-r & 2.0 & 1.9 & 3.9 & 2.2$\pm$0.1 & \textbf{4.6$\pm$0.7} & 0.0$\pm$0.0 & 0.0$\pm$0.0 \\
    walk-m & 32.6 & 40.3 & 25.0 & 33.0$\pm$2.3 & \textbf{62.3$\pm$6.1} & 40.3$\pm$7.1 & \textbf{61.7$\pm$9.2} \\
    walk-mr & 9.0 & 2.9 & 6.9 & 9.5$\pm$0.4 & \textbf{13.6$\pm$1.2} & 14.1$\pm$1.8 & \textbf{18.8$\pm$1.6} \\
    walk-me & 27.6 & 46.7 & 42.2 & 75.7$\pm$11.8 & \textbf{95.3$\pm$4.6} & 66.7$\pm$5.3 & \textbf{73.4$\pm$6.7} \\
    walk-e & 103.4 & 30.2 & 102.7 & \textbf{108.3$\pm$6.7} & 103.5$\pm$5.9 & 103.5$\pm$1.9 & \textbf{108.8$\pm$3.2} \\
    ant-r & 13.6 & \textbf{31.3} & 26.8 & 14.4$\pm$1.6 & \textbf{16.0$\pm$1.7} & 12.4$\pm$2.2 & \textbf{21.6$\pm$2.0} \\
    ant-m & 89.1 & 36.1 & 96.4 & 91.6$\pm$4.4 & \textbf{101.1$\pm$5.9} & 92.5$\pm$2.7 & \textbf{102.7$\pm$3.4} \\
    ant-mr & 59.7 & 24.0 & 64.1 & 58.2$\pm$7.1 & \textbf{64.8$\pm$4.6} & \textbf{69.6$\pm$8.1} & 57.4$\pm$2.0 \\
    ant-me & 113.1 & 100.5 & 111.9 & 116.8$\pm$3.5 & \textbf{121.2$\pm$3.8} & 107.3$\pm$4.4 & \textbf{120.5$\pm$2.9} \\
    ant-e & 116.3 & 76.3 & 124.5 & 126.8$\pm$1.7 & \textbf{129.0$\pm$2.4} & \textbf{111.0$\pm$2.4} & 107.9$\pm$4.0 \\ 
    \midrule
    \textbf{Total} & 1008.6 & 739.7 & 1019.5 & 1039.0 & \textbf{1198.7} & 1042.1 & \textbf{1156.3} \\ 
    \bottomrule
    \end{tabular}
    \endgroup

\end{table}

\begin{table}[t]
    \centering
    \begingroup
    \setlength{\tabcolsep}{11pt}
    \caption{\textbf{Performance comparison under morphology shifts with extremely limited target domain data.} We report the normalized score evaluated in the target domain, and $\pm$ captures the standard deviation across 5 seeds. We \textbf{bold} the highest scores for each task.}
    \label{tab:limitedappendix}
    \begin{tabular}{l|ccc|cc|cc}
    \toprule
    \textbf{Dataset} & IQL & BOSA & DARA & IGDF & DVDF-IGDF & OTDF & DVDF-OTDF\\
    \midrule
    half-r & 0.0 & \textbf{2.2} & 2.0 & 0.0$\pm$0.0 & 0.0$\pm$0.0 & 2.0$\pm$0.1 & \textbf{2.2$\pm$0.1} \\
    half-m & 18.7 & 17.3 & 16.1 & 22.6$\pm$1.2 & \textbf{26.7$\pm$3.5} & \textbf{24.6$\pm$3.4} & 22.9$\pm$3.6 \\
    half-mr & 12.5 & 9.5 & 8.6 & 14.8$\pm$1.9 & \textbf{19.4$\pm$2.0} & 17.9$\pm$1.6 & \textbf{25.1$\pm$2.4} \\
    half-me & 12.3 & 15.4 & 13.7 & 14.9$\pm$0.5 & \textbf{21.9$\pm$3.1} & 11.5$\pm$0.8 & \textbf{19.8$\pm$1.7} \\
    half-e & 4.9 & 3.6 & 2.9 & \textbf{6.2$\pm$0.1} & 5.9$\pm$0.2 & 10.7$\pm$3.5 & \textbf{15.4$\pm$2.2} \\
    hopp-r & 3.7 & 1.1 & 3.4 & \textbf{4.1$\pm$0.4} & 3.8$\pm$0.1 & \textbf{4.4$\pm$0.2} & 4.0$\pm$0.1 \\
    hopp-m & \textbf{35.2} & 20.6 & 25.5 & \textbf{31.6$\pm$4.2} & 20.3$\pm$2.9 & \textbf{24.2$\pm$3.8} & 19.3$\pm$2.0 \\
    hopp-mr & 2.3 & 3.7 & 3.5 & 4.1$\pm$0.3 & \textbf{7.4$\pm$0.4} & 4.6$\pm$0.2 &\textbf{5.6$\pm$0.3} \\
    hopp-me & \textbf{38.3} & 10.2 & 19.7 & 36.3$\pm$3.7 & \textbf{43.2$\pm$2.8} & 31.6$\pm$2.9 & \textbf{37.4$\pm$3.8} \\
    hopp-e & 28.3 & 7.3 & 13.0 & 29.6$\pm$2.0 & \textbf{44.6$\pm$7.6} & 43.3$\pm$6.2 & \textbf{48.9$\pm$4.1} \\
    walk-r & 0.0 & 0.0 & 0.0 & 0.0$\pm$0.0 & 0.0$\pm$0.0 & 0.0$\pm$0.0 & 0.0$\pm$0.0 \\
    walk-m & 16.4 & 10.6 & 15.8 & 14.3$\pm$2.3 & \textbf{24.3$\pm$1.2} & 19.3$\pm$2.4 & \textbf{23.7$\pm$2.2} \\
    walk-mr & 3.6 & 0.0 & 2.9 & \textbf{4.4$\pm$0.6} & 3.0$\pm$0.2 & 4.1$\pm$0.5 & \textbf{4.8$\pm$0.6} \\
    walk-me & 16.7 & 12.8 & 10.2 & 12.6$\pm$1.0 & \textbf{20.9$\pm$3.7} & 15.4$\pm$1.2 & \textbf{23.0$\pm$1.5} \\
    walk-e & 8.3 & 9.3 & 12.4 & \textbf{13.9$\pm$0.1} & 10.2$\pm$0.4 & 13.5$\pm$0.4 & \textbf{18.9$\pm$0.2} \\
    ant-r & 14.1 & \textbf{20.3} & 16.2 & 13.2$\pm$0.4 & \textbf{17.1$\pm$3.6} & \textbf{10.2$\pm$0.6} & 9.1$\pm$0.2 \\
    ant-m & 17.3 & 30.1 & \textbf{32.9} & 25.6$\pm$2.5 & \textbf{28.1$\pm$5.5} & \textbf{32.3$\pm$4.0} & 26.4$\pm$4.4 \\
    ant-mr & \textbf{29.8} & 19.7 & 13.5 & \textbf{28.7$\pm$1.5} & 19.7$\pm$1.9 & 20.4$\pm$3.0 & \textbf{27.0$\pm$2.2} \\
    ant-me & 15.4 & 15.8 & 12.3 & 17.5$\pm$2.2 & \textbf{21.1$\pm$2.8} & 19.1$\pm$1.8 & \textbf{23.2$\pm$1.9} \\
    ant-e & 20.7 & 20.5 & \textbf{23.1} & 15.8$\pm$1.1 & \textbf{19.0$\pm$3.2} & \textbf{22.7$\pm$1.4} & 16.8$\pm$0.8 \\ 
    \midrule
    \textbf{Total} & 298.5 & 230.0 & 247.7 & 310.2 & \textbf{356.6} & 331.8 & \textbf{373.5} \\ 
    \bottomrule
    \end{tabular}
    \endgroup

\end{table}

\subsection{Extended Results with Extremely Limited Target Data}
\label{appendix:extended}

\begin{table}[t]
    \centering
    \begingroup
    \setlength{\tabcolsep}{5.5pt}
    \caption{\textbf{Performance comparison under kinematic shifts with extremely limited target domain data.} We report the normalized score evaluated in the target domain and $\pm$ captures the standard deviation across 5 seeds. We \textbf{bold} the highest scores for each task.}
    \label{tab:limited}
    \begin{tabular}{l|ccc|cc|cc}
    \toprule
    \textbf{Dataset} & IQL & BOSA & DARA & IGDF & DVDF-IGDF & OTDF & DVDF-OTDF\\
    \midrule
    half-r & 4.8 & 2.2 & \textbf{6.7} & \textbf{5.6$\pm$1.4} & 4.8$\pm$0.5 & \textbf{2.1$\pm$0.1} & 1.7$\pm$0.1 \\
    half-m & 19.8 & 23.6 & 20.4 & 21.6$\pm$0.7 & \textbf{29.7$\pm$1.6} & \textbf{22.8$\pm$1.9} & 21.3$\pm$2.6 \\
    half-mr & 5.9 & 0.0 & 4.0 & \textbf{7.7$\pm$1.2} & 6.6$\pm$2.1 & 4.0$\pm$1.1 & \textbf{9.3$\pm$1.5} \\
    half-me & 9.5 & 11.1 & 7.2 & 14.3$\pm$0.9 & \textbf{22.9$\pm$1.0} & 7.6$\pm$0.4 & \textbf{13.9$\pm$3.7} \\
    half-e & \textbf{7.3} & 4.2 & 6.1 & 4.2$\pm$0.1 & \textbf{6.1$\pm$0.1} & \textbf{5.2$\pm$1.6} & 4.7$\pm$1.0 \\
    hopp-r & 2.4 & 1.9 & 2.2 & 3.7$\pm$0.2 & \textbf{4.2$\pm$0.1} & 1.2$\pm$0.1 & \textbf{5.1$\pm$1.4} \\
    hopp-m & 26.1 & 10.6 & 13.2 & 34.6$\pm$5.9 & \textbf{38.4$\pm$4.1} & \textbf{36.1$\pm$4.4} & 32.8$\pm$3.1 \\
    hopp-mr & 7.4 & 7.8 & 9.8 & 9.8$\pm$1.0 & \textbf{13.0$\pm$2.9} & 14.7$\pm$3.3 & \textbf{21.3$\pm$5.0} \\
    hopp-me & 9.3 & 11.4 & 8.6 & 12.3$\pm$1.4 & \textbf{20.1$\pm$5.2} & 7.1$\pm$2.1 & \textbf{15.2$\pm$2.9} \\
    hopp-e & \textbf{11.1} & 8.3 & \textbf{11.8} & \textbf{9.4$\pm$0.5} & 8.0$\pm$0.2 & \textbf{6.7$\pm$0.3} & 6.4$\pm$0.1 \\
    walk-r & 4.6 & 0.0 & 0.0 & \textbf{8.1$\pm$2.9} & 6.6$\pm$1.4 & 0.0$\pm$0.0 & 0.0$\pm$0.0 \\
    walk-m & 7.7 & 7.6 & 4.4 & 14.3$\pm$1.7 & \textbf{23.0$\pm$3.9} & 11.8$\pm$1.9 & \textbf{16.1$\pm$3.7} \\
    walk-mr & 3.9 & 9.1 & 4.3 & 2.4$\pm$0.1 & \textbf{3.7$\pm$0.1} & 7.4$\pm$1.3 & \textbf{16.0$\pm$1.6} \\
    walk-me & 5.7 & 4.8 & 6.4 & 8.4$\pm$2.1 & \textbf{16.2$\pm$4.4} & 8.1$\pm$2.4 & \textbf{15.9$\pm$3.8} \\
    walk-e & 10.6 & 9.3 & 20.1 & \textbf{13.7$\pm$2.8} & 11.9$\pm$1.6 & 15.8$\pm$2.0 & \textbf{19.3$\pm$1.2} \\
    ant-r & 7.0 & 6.5 & 5.5 & \textbf{11.8$\pm$3.0} & 8.4$\pm$2.2 & 7.3$\pm$0.5 & \textbf{10.1$\pm$0.6} \\
    ant-m & 14.6 & 19.1 & 21.3 & 20.3$\pm$1.2 & \textbf{24.1$\pm$3.3} & 42.3$\pm$7.7 & \textbf{48.1$\pm$6.3} \\
    ant-mr & 7.3 & \textbf{17.6} & 13.2 & 3.9$\pm$0.7 & \textbf{13.4$\pm$2.6} & \textbf{17.6$\pm$2.8} & 14.7$\pm$2.3 \\
    ant-me & 5.3 & 10.1 & 2.9 & \textbf{9.4$\pm$4.6} & 9.0$\pm$4.8 & 4.3$\pm$0.5 & \textbf{12.3$\pm$4.6} \\
    ant-e & 3.1 & 4.3 & 0.0 & 2.9$\pm$1.4 & \textbf{5.5$\pm$0.8} & \textbf{5.1$\pm$1.2} & 4.7$\pm$1.5 \\ 
    \midrule
    \textbf{Total} & 173.4 & 169.5 & 141.3 & 218.4 & \textbf{275.6} & 227.2 & \textbf{288.9} \\ 
    \bottomrule
    \end{tabular}
    \endgroup

\end{table}

In this section, we consider a more challenging setting compared with Section~\ref{sec:main} following~\citep{lyu2025cross, lyu2024odrlabenchmark}, where only extremely limited target domain data (around 5,000 transitions) are available. This setting reflects real-world scenarios, such as nuclear power plant control, where accessing more target domain data is often impractical. Typical offline RL will fail under such extreme data scarcity, making the proper utilization of source domain data much more crucial.

\textbf{Tasks and Datasets.} The tasks and types of dynamics shifts are identical to those in Section~\ref{sec:main}. The only distinction lies in the target domain datasets, which now consist of only 5,000 transitions sampled from the D4RL datasets, instead of the $10\%$ subset used in Section~\ref{sec:main}.

\textbf{Baselines.} We maintain the same baselines (IQL, BOSA, DARA, IGDF, and OTDF), and implement DVDF-IGDF and DVDF-OTDF for comparison as in Section~\ref{sec:main}.

\textbf{Experimental Results.} We run each algorithm for 1M gradient steps with 5 random seeds. We present the empirical results under kinematic shifts in Table~\ref{tab:limited}, and the results under morphology shifts in Table~\ref{tab:limitedappendix}. 

As shown in Table~\ref{tab:limited}, DVDF substantially enhances the performance of base algorithms, elevating total normalized scores by $\textbf{26.2\%}$ (IGDF) and $\textbf{27.1\%}$ (OTDF) under kinematic shifts. Specifically, DVDF-IGDF outperforms IGDF in \textbf{13} out of 20 tasks, and DVDF-OTDF surpasses OTDF in \textbf{12} out of 20 tasks, while achieving comparable performance in the remaining tasks. 

The results in Table~\ref{tab:limitedappendix} demonstrate that DVDF maintains superiority over baseline methods under morphology shifts: DVDF-OTDF achieves the highest total score of \textbf{373.5} among all methods, surpassing OTDF by $\textbf{12.6\%}$ and leading in \textbf{13} out of 20 tasks. Similarly, DVDF-IGDF improves IGDF by $\textbf{15.0\%}$ in total score and achieves better performance in \textbf{12} out of 20 tasks. These results demonstrate the superiority of DVDF with extremely limited target domain data.

\subsection{More Comparisons with Recent Studies}

{In this section, we compare our method DVDF with two more recent studies, PSEC~\citep{liuskill} and DmC~\citep{le2025dmc}. PSEC proposes to preserve the prior learned skills in a parametric space and adaptively composes them using a context-aware module to handle new tasks. In the cross-domain offline setting, PSEC first learns separate policies from the source domain and target domain data, which are then dynamically combined to work under the target dynamics. DmC employs a KNN-based estimator as a measure of the dynamics gap, and utilizes the KNN proximity score as a guiding signal for diffusion-based data generation. Source domain samples are selected based on the proximity score and combined with the target data for training. As a plug-in method, DVDF could be seamlessly integrated into both PSEC and DmC. Specifically, DVDF could assist PSEC by selecting beneficial source domain samples to facilitate target policy learning from the limited target dataset. Similarly, DVDF could be integrated into DmC's source data selection process to enable dynamics- and value-aligned data selection. We refer to these two integrated methods as DVDF-PSEC and DVDF-DmC. }

{We evaluate DVDF-PSEC and DVDF-DmC on four tasks (\texttt{halfcheetah}, \texttt{hopper}, \texttt{walker2d}, \texttt{ant}) under kinematic shifts, with datasets of three qualities: \texttt{medium}, \texttt{medium-expert}, and \texttt{expert}. All other experimental settings follow Section~\ref{sec:main}. The performance comparison between the base algorithms (PSEC, DmC) and their DVDF-enhanced versions (DVDF-PSEC and DVDF-DmC) is presented in Table~\ref{tab:psec_dmc}.}

{The results in Table~\ref{tab:psec_dmc} show that DVDF-PSEC outperforms PSEC on 11 out of 12 datasets, while DVDF-DmC surpasses DmC on 10 out of 12. Furthermore, the DVDF-enhanced versions achieve a substantially higher total score, confirming the versatility of DVDF as a plug-in module.}

\begin{table}[t]
    \centering
    \caption{\textbf{Performance comparison with PSEC and DmC under kinematic shifts.} We report the normalized score evaluated in the target domain, and $\pm$ captures the standard deviation across 5 seeds. We \textbf{bold} the highest scores for each task.}
    \label{tab:psec_dmc}
    \begin{tabular}{l|cc|cc}
    \toprule
    \textbf{Dataset} & PSEC & DVDF-PSEC & DmC & DVDF-DmC \\
    \midrule
    half-kine-m & \textbf{33.4$\pm$1.0} & 30.1$\pm$0.8 & \textbf{39.6$\pm$1.1} & 33.2$\pm$0.2 \\
    half-kine-me & 41.6$\pm$2.3 & \textbf{49.3$\pm$2.7} & 47.3$\pm$5.8 & \textbf{52.6$\pm$3.2} \\
    half-kine-e & 52.0$\pm$4.7 & \textbf{58.3$\pm$3.4} & 66.9$\pm$2.4 & \textbf{73.6$\pm$1.6} \\
    hopper-kine-m & 47.2$\pm$3.5 & \textbf{53.2$\pm$1.5} & 53.8$\pm$5.0 & \textbf{65.4$\pm$2.8} \\
    hopper-kine-me & 31.7$\pm$3.6 & \textbf{42.0$\pm$2.4} & 47.2$\pm$1.3 & \textbf{56.3$\pm$2.8} \\
    hopper-kine-e & 70.1$\pm$4.3 & \textbf{74.2$\pm$3.4} & 92.6$\pm$2.7 & \textbf{98.2$\pm$1.6} \\
    walker2d-kine-m & 41.0$\pm$1.9 & \textbf{56.7$\pm$5.2} & 48.3$\pm$4.8 & \textbf{65.1$\pm$2.4} \\
    walker2d-kine-me & 53.7$\pm$3.0 & \textbf{58.6$\pm$3.6} & 62.6$\pm$2.9 & \textbf{69.2$\pm$1.2} \\
    walker2d-kine-e & 75.9$\pm$1.4 & \textbf{95.8$\pm$2.8} & 83.0$\pm$2.6 & \textbf{93.0$\pm$2.6} \\
    ant-kine-m & 82.2$\pm$1.5 & \textbf{87.4$\pm$1.4} & \textbf{84.1$\pm$5.5} & 81.2$\pm$2.0 \\
    ant-kine-me & 94.6$\pm$1.7 & \textbf{106.0$\pm$2.9} & 107.1$\pm$3.5 & \textbf{112.3$\pm$2.1} \\
    ant-kine-e & 103.5$\pm$1.0 & \textbf{118.4$\pm$1.5} & 101.2$\pm$1.0 & \textbf{116.1$\pm$2.9} \\
    \midrule
    \textbf{Total} & 726.9 & \textbf{830.0} & 833.7 & \textbf{916.2} \\
    \bottomrule
    \end{tabular}
\end{table}

\section{Compute Infrastructure}
\label{appendix:compute}
We list our compute infrastructure for our experiments in Table~\ref{tab:compute}.

\begin{table}[H]
    \centering
    \caption{Compute Infrastructure}
    \label{tab:compute}
    \begin{tabular}{c|c|c}
    \toprule
    \textbf{CPU} & \textbf{GPU} & \textbf{Memory}\\
    \midrule
    AMD EPYC 7452 & RTX3090$\times$8 & 288GB\\
    \bottomrule
    \end{tabular}

\end{table}

\section{Training Time}

We report the average training time for our method and the baselines (including IQL, IGDF, OTDF, DVDF-IGDF, and DVDF-OTDF) across various tasks over 1M steps in Table~\ref{tab:time}. Note that the additional computational overhead for OTDF and DVDF stems from solving complex optimal transport matrices and pre-training for the advantage function, respectively. Fortunately, these computations can be precomputed, minimizing their impact on subsequent experiments.

\begin{table}[H]
    \centering
    \caption{Training time comparison between various methods. h=hour(s), m=minute(s).}
    \label{tab:time}
    \begin{tabular}{c|c|c|c|c}
    \toprule
    IQL & IGDF & OTDF & DVDF-IGDF & DVDF-OTDF \\
    \midrule
    5h24m & 6h56m & 9h17m & 11h43m & 14h07m \\
    \bottomrule
    \end{tabular}

\end{table}

\section{Broader Impacts}
\label{appendix:impacts}
This paper presents work whose goal is to promote effective cross-domain offline RL. Our work has potential positive social impacts. For example, our research could enable more efficient development of advanced robotics systems by effectively utilizing source domain data. At present, we have not identified any foreseeable negative impacts arising from this research.

\section{LLM Usage Declaration}

The use of LLMs in this work is strictly limited to grammatical polishing of the initial draft. LLMs are not involved in any core research components, including but not limited to the conception of the method, theoretical proofs, and experiments.

\end{document}